\def\eqref#1{equation~\ref{#1}}
\def\1{\bm{1}}
\def\vmu{{\bm{\mu}}}
\def\vv{{\bm{v}}}
\DeclareMathAlphabet{\mathsfit}{\encodingdefault}{\sfdefault}{m}{sl}
\SetMathAlphabet{\mathsfit}{bold}{\encodingdefault}{\sfdefault}{bx}{n}
\def\sI{{\mathbb{I}}}
\def\sO{{\mathbb{O}}}
\def\sP{{\mathbb{P}}}
\def\sQ{{\mathbb{Q}}}
\newcommand{\E}{\mathbb{E}}
\newcommand{\ie}{\textit{i}.\textit{e}.}
\newcommand{\eg}{\textit{e}.\textit{g}.}
\newcommand{\hd}[1]{{\color{black}{#1}}} 
\newcommand{\hdr}[1]{\ignorespaces} 
\newcommand{\vsigma}{\bm{\sigma}}
\newtheorem{definition}{Definition}[section]
\newtheorem{theorem}{Theorem}[section]
\newtheorem{lemma}{Lemma}[section]
\title{Neural Program Synthesis with Query}
\author{
Di Huang\textsuperscript{\rm 1,2,4}, Rui Zhang\textsuperscript{\rm 1,4}\thanks{Corresponding author. Contact: \{huangdi20b, zhangrui\}@ict.ac.cn.}, Xing Hu\textsuperscript{\rm 1,4}, Xishan Zhang\textsuperscript{\rm 1,4}, Pengwei Jin\textsuperscript{\rm 1,2,4},\\
\textbf{
Nan Li\textsuperscript{\rm 1,3,4}, Zidong Du\textsuperscript{\rm 1,4}, Qi Guo\textsuperscript{\rm 1} \& Yunji Chen\textsuperscript{\rm 1,2}} \\
\textsuperscript{\rm 1}SKL of Computer Architecture, Institute of Computing Technology, CAS \\
\textsuperscript{\rm 2}University of Chinese Academy of Sciences \\
\textsuperscript{\rm 3}University of Science and Technology of China \\
\textsuperscript{\rm 4}Cambricon Technologies \\
}
\begin{document}

\maketitle

\begin{abstract}
Aiming to find a program satisfying the user intent given input-output examples, program synthesis has attracted increasing interest in the area of machine learning.
Despite the promising performance of existing methods, most of their success comes from the privileged information of well-designed input-output examples.
However, providing such input-output examples is unrealistic because it requires the users to have the ability to describe the underlying program with \hd{a few} input-output examples under the training distribution.
In this work, we propose a query-based framework that trains a query neural network to generate informative input-output examples automatically and interactively \hd{from a large query space}.
\hdr{To optimize the framework, we propose the \textit{functional space (F-space)} to represent input-output examples and the programs and model the similarity between them in a continuous and differentiable way.}
\hd{The quality of the query depends on the amount of the mutual information between the query and the corresponding program,
which can guide the optimization of the query framework.
To estimate the mutual information more accurately, we introduce the \textit{functional space (F-space)} which models the relevance between the input-output examples and the programs in a differentiable way.}
We evaluate the effectiveness and generalization of the proposed query-based framework on the Karel task and the list processing task. 
Experimental results show that the query-based framework can generate informative input-output examples which achieve and even outperform well-designed input-output examples\hdr{ of state-of-the-art methods}.
\end{abstract}

\section{Introduction}
Program synthesis is the task of automatically finding a program that satisfies the user intent expressed in the form of some specifications like input-output examples~\citep{Gulwani2017ProgramS}.
Recently, there has been an increasing interest in tackling it using neural networks in various domains, including string manipulation~\citep{Gulwani2011AutomatingSP, Gulwani2012SpreadsheetDM, Devlin2017RobustFillNP}, list processing~\citep{Balog2017DeepCoderLT, Zohar2018AutomaticPS} and graphic applications~\citep{Ellis2018LearningTI, Ellis2019WriteEA}.

Despite their promising performance, most of their success relies on the well-designed input-output examples, without which the performance of the program synthesis model drops heavily. 
\hdr{Traditionally, the input-output examples are selected elaborately.}
For example, in Karel~\citep{Devlin2017RobustFillNP, Bunel2018LeveragingGA}, the given input-output examples are required to have a high branch coverage ratio on the test program;
In list processing, the given input-output examples are guided by constraint propagation~\citep{Balog2017DeepCoderLT} to ensure their effectiveness on the test program.
However, providing such input-output examples is unrealistic because it requires the users to be experienced in programming to describe the underlying program with several input-output examples.
Worse, the users must be familiar with the distribution of the training dataset to prevent themselves from providing out-of-distribution examples~\citep{Shin2019SyntheticDF}. 
\hdr{Some researchers have tried to solve this problem by search or random selection to find counterexamples iteratively~\citep{Jha2010OracleguidedCP, Laich2020GuidingPS, Hajipour2020IReEnIR}.
However, they do not model the similarity between the input-output examples and programs accurately, and thus the quality of the examples cannot be guaranteed.
}
In summary, how to generate informative input-output examples without expert experience is still an important problem that remains a challenge.

In this paper, we propose a query-based framework to automatically and efficiently generate informative input-output examples from a large space, which means that the underlying program can be easily distinguished with these examples.
\hdr{Inspired by the interactive program synthesis, the query-based framework trains a query neural network to generate informative input-output examples by querying the underlying program iteratively.
Specifically, we firstly train a query network with programs and then use it to generate informative input-output examples. After that, we use these generated examples to train the program synthesis network.}
\hd{This framework consists of two parts: the query network and the synthesis network, and each part is trained separately.
The query network is trained to generate the input-output examples in an iterative manner,
and then the synthesis network is trained to synthesize the program with the input-output examples generated by the query network.}
It has three advantages:
(1) There is no demand for well-designed input-output examples in both training and testing, which leads to a good generalization.
(2) The query network works in an efficiently generative manner, which can essentially reduce the computation cost while facing problems with large input-output space.
(3) The query network serves as a plug-and-play module with high scalability, for it is separated from the program synthesis process entirely.


\begin{figure}[t]
\begin{center}
\includegraphics[width=\textwidth]{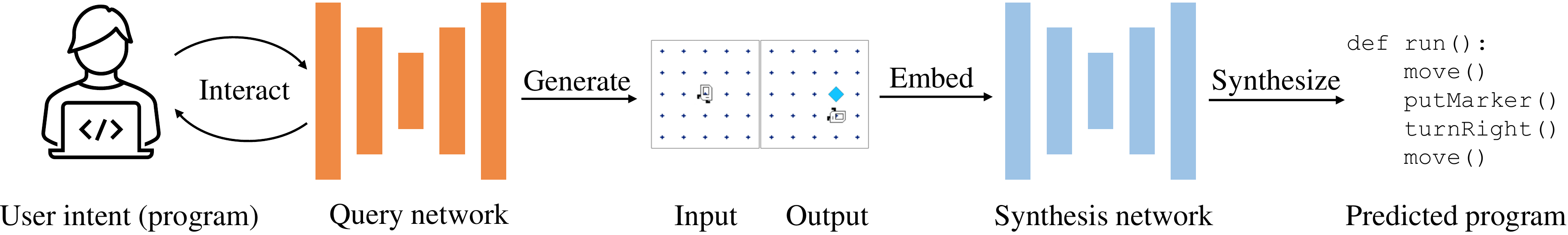}
\end{center}
\caption{The query-based framework. The query network generates informative input-output examples by interacting with the user, and then the generated examples are sent to the synthesis network to synthesize the underlying program.}
\label{fig:query}
\end{figure}

To train the query network, 
\hd{the key idea is to model the query process as a decision tree and find out that the informativeness of the input-output examples is associated with the amount of mutual information (\ie the information gain in the decision tree) between the input-output examples and the programs, and thus the query network can be optimized by maximizing the mutual information.
The mutual information can be approximated by the InfoNCE loss~\citep{Oord2018RepresentationLW}, which depends on the relevance between the samples.
However, due to the many-to-many relationship between the input-output examples and the programs, the relevance is difficult to be measured straightforwardly.}
To this end, we introduce the \textit{functional space (F-space)}\hdr{ and define the distance between programs in it}.
In F-space, each program can be projected into a vector, and
each input-output example corresponds to a set of candidate programs, which can be represented by a normal distribution~\citep{Sun2019InformationGeometricSE}.
\hd{
Whenever a new example is queried, the new set of candidate programs is produced by the intersection of the two original sets of programs, and the new distribution is produced by the product of the two original distributions.
The more the queried examples, the smaller the size of the program set, the lower the entropy of the distribution.
With the InfoNCE loss and F-space, the query network is trained to generate the input-output examples whose F-space distribution can maximize the probability of the corresponding program and minimize the probability of the others.
}
\hdr{
The distance between two vectors indicates the functional difference between these two programs,
so if two programs are projected into the same vector in the F-space, they are supposed to be functionally equivalent.
Each input-output example corresponds to a set of program vectors, which can be represented by a normal distribution~\citep{Sun2019InformationGeometricSE} to make the representation continuous and differentiable.
}
\hdr{With the concept of information theory, we train the query network to generate queried input-output examples that maximize the probability of the corresponding program and minimize the probability of the others, resulting in the maximization of the mutual information between the queries and the corresponding programs.}
\hdr{To optimize the query network, \hd{we show the connection between the query strategy and the mutual information}, and adopt the InfoNCE loss~\citep{Oord2018RepresentationLW} to maximize the mutual information during training.}
\hd{Once the query network is trained, the training of the synthesis network is no different from what other researchers have done before, except that the input-output examples are generated by the query network.}

We evaluate our method on the Karel task and the list processing task \hd{which have large input spaces}.
Without utilizing the well-designed input-output examples both in training and testing, we achieve and even outperform the state-of-the-art performance on the Karel task and list processing task, which shows the effectiveness and generalization of our query-based framework.

\section{Problem Statement}
\label{sec:overview}


Intuitively, consider an oracle that contains some kind of symbolic rules (\eg programs),
our goal is to discover these symbolic rules inside this oracle.
To do this, the traditional program synthesis assumes that the oracle can provide some informative signals \hd{(\eg input-output examples)} automatically, based on which the synthesizer can find the rules.
\hdr{However, this is a strong assumption on the oracle and cannot cover every situation.}
\hd{However, this is a strong assumption with high requirements on the oracle that is impractical in many cases.}
\hdr{In this work, we consider a more general problem called the query problem, that the informative signals are gained actively by querying the oracle, and the only requirement for the oracle is that it can respond to all kinds of queries (including meaningless ones).}
\hd{In this work, we consider a query problem where the informative signals are gained actively by querying the oracle
under a much weaker assumption that the behavior of the oracle is deterministic (\ie the same input always results in the same output).
}



\hdr{The crux of this query problem is, how to generate queries that can help us to identify the underlying rules in the oracle efficiently?}

\hd{Following the intuition, a reasonable solution for the query problem would be "making the programs distinguishable with as few queries as possible".
To make this statement concrete and practical, we introduce the following formulations.}
\hdr{For the sake of illustration, consider the program synthesis task, the oracle corresponds to the underlying program $p^*$ that satisfies the user intent, and the $(query, response)$ pairs correspond to the input-output examples $\llbracket{e}\rrbracket = \{(x_k, y_k)\}^K$.
To measure different queries, we borrow the terminology "acquisition function" from Bayesian optimization.
The acquisition function scores all possible queries and the query algorithm chooses the one with the highest score as the next query.
We set the acquisition function to be the mutual information between the queries and the oracle $I(\llbracket{e}\rrbracket; p^*)$, maximizing which brings more information gain.
Specifically, given input-output examples queried so far $\llbracket{e}\rrbracket_{t-1}=\{(x_1, y_1), (x_2, y_2), \cdots (x_{t-1}, y_{t-1})\}$, we train the query network to pick the query $x_t$ which obtains the response $y_t$ and can maximize $I(\llbracket{e}\rrbracket\cup\{(x_t, y_t)\};p^*)$.
The mutual information is hard to calculate analytically. 
Thus, to approximate the $I(\llbracket{e}\rrbracket\cup\{(x_t, y_t)\};p^*)$, we adopt the InfoNCE~\citep{Oord2018RepresentationLW} and propose a novel way to model the relationship between programs and input-output
examples called \textit{functional space (F-space)}.
Before diving into the specific method, we will give a problem formulation to make our problem description more accurate.}


\hdr{To lay a theoretical foundation of our query method, we give out the problem formulation as follows.}

First, we define the functional equivalence, which is consistent with the definition of the equivalence of functions in mathematics.
\begin{definition}[Functional equivalence]
\label{def:functional equivalence}
Let $\sI$ be the input example domain containing all valid input examples, $\sO$ be the output example domain containing all possible output examples, and $\sP$ be the program domain containing all valid programs under the domain specific language (DSL), each program $p \in \sP$ can be seen as a function $p: \sI \rightarrow \sO$.
For two programs $p_i \in \sP$ and $p_j \in \sP$, $p_i$ and $p_j$ are functional equivalent if and only if $\forall x \in \sI, p_i(x) = p_j(x)$.
\end{definition}

Using the concept of functional equivalence, we can formulate the program synthesis task:
\begin{definition}[Program synthesis]
 Suppose there is an underlying program $p \in \sP$ and $K$ input-output examples $\llbracket{e}\rrbracket = \{(x_k, y_k)| (x_k, y_k) \in \sI\times\sO, k=1\cdots K\}$ generated by it, the program synthesis task aims to find a program $\hat{p}$ which is functional equivalent to $p$ using $\llbracket{e}\rrbracket$.
\end{definition}

Following the definitions above, we can define the task of the query.
\begin{definition}[Query]
\label{def:query}
Given a program $p \in \sP$ and a set of history $K$ input-output examples $\llbracket{e}\rrbracket$, the query process firstly generates a query by the query network $f_q$: $x_{K+1} = f_q(\llbracket{e}\rrbracket) \in \sI$. Then, a corresponding response is given by the program $y_{K+1}=p(x_{K+1}) \in \sO$. The query and response are added to the history input-output examples: $\llbracket{e}\rrbracket \leftarrow \llbracket{e}\rrbracket \cup \{(x_{K+1}, y_{K+1})\}$ and this process repeats.
\end{definition}
The query process aims to distinguish the underlying program with as few queries as possible\hd{, based on which we can define the optimal query strategy.
\begin{definition}[The optimal query strategy]
\label{def:optimal query2}
Given a set of programs $\sP$ and a target program $p^*$, the optimal query strategy $Q$ aims to distinguish the $p^*$ by generating as few input-output examples $\llbracket{e}\rrbracket$ as possible.
\end{definition}
When the query space is large, it is impractical to find the optimal query strategy by direct search.
Thus, we take the query process as the expansion of a decision tree where the leaves are the programs, the nodes are the queries, and the edges are the responses,
and then the generation of queries can be guided by the information gain, which is also known as the mutual information:
\begin{equation}
\label{equ:query and mi}
\begin{aligned}
\llbracket{e}\rrbracket^* 
&= \mathop{\arg\max}\limits_{\llbracket{e}\rrbracket} I(\sP;\llbracket{e}\rrbracket),
\end{aligned}
\end{equation}
}
\section{Methods}
\label{sec:methods}
\hdr{The query network and the synthesis network are trained separately.
The query network is trained to find the query contains that is most informative, and after that, the synthesis network is trained to predict the corresponding program using the queried input-output examples.
In the following, first, we introduce the F-space to model the similarity between the input-output examples and programs, and then, we illustrate how to train the query network with F-space.}

\hd{
The mutual information is hard to calculate due to the large spaces of queries and programs.
To this end, we resort to the InfoNCE loss~\citep{Oord2018RepresentationLW} which estimates the lower bound of the mutual information:
\begin{equation}
\label{equ:loss}
L_{NCE} = -\E[log(\frac{exp(f(\llbracket{e}\rrbracket, p_n))}{\sum_{i=1}^N exp(f(\llbracket{e}\rrbracket, p_i))})],
\end{equation}
and
\begin{equation}
I(\sP;\llbracket{e}\rrbracket) \geq log(N)-L_{NCE},
\end{equation}
where $f(\cdot, \cdot)$ denotes a relevance function.
Maximizing the mutual information is equivalent to minimizing $L_{NCE}$.
Intuitively, InfoNCE maximizes the relevance between the positive pairs and minimizes the relevance between the negative pairs.
Specifically, for a batch of data $\{(\llbracket{e}\rrbracket_n, p_n), p_n\}^N$, we construct positive pairs as $\{(\llbracket{e}\rrbracket_i, p_i)\}$ and negative pairs as $\{(\llbracket{e}\rrbracket_i, p_j)|i\neq j\}$.
Traditionally, the relevance is defined as the dot product of the samples, which may be inaccurate for the many-to-many relationship between the input-output examples and the programs.
Thus next, we will introduce \textit{functional space (F-space)} to model the relationship properly.}

\begin{figure}[t]
\begin{center}
\includegraphics[scale=.42]{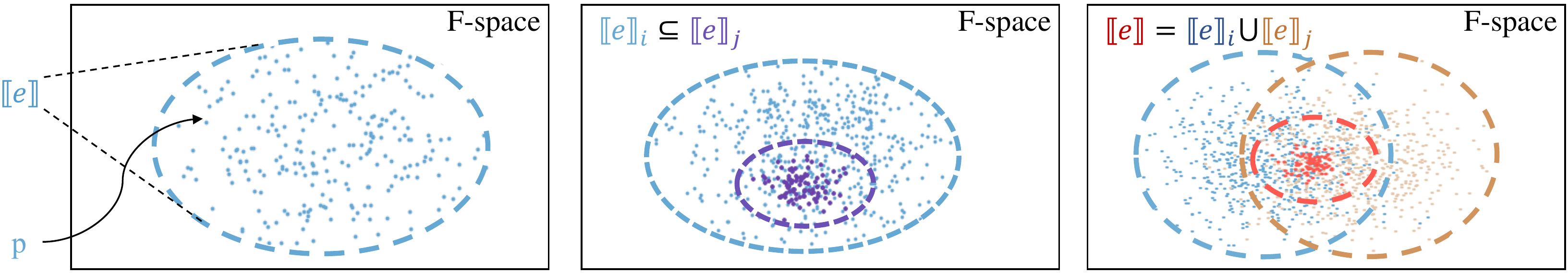}
\end{center}
\caption{The illustration of F-space. Left: The projection of the input-output examples $\llbracket{e}\rrbracket$ and program $p$; Middle: The subset relationship between $\llbracket{e}\rrbracket_i$ and $\llbracket{e}\rrbracket_j$. Note that this relation is opposite in F-space;
Right: The union operation of $\llbracket{e}\rrbracket_i$ and $\llbracket{e}\rrbracket_j$. This operation is also opposite in F-space where an union operation correspond to an intersection operation in F-space.}
\label{fig:f-space}
\end{figure}

\subsection{F-space}
\hdr{To model the query process with a neural network, we propose the \textit{functional space (F-space)}.}
\begin{definition}[F-space and functional distance]
\label{def:f-space}
F-space is a $|\sI|$ dimensional space which consists of all valid programs that can be implemented by program domain $\sP$.
Each program is represented by $|\sI|$ different output examples $\vv=(y_1, y_2, \dots, y_{|\sI|})$.
The distance in F-space can be measured by the number of different output examples: $d(\vv_i, \vv_j) = |diff(\vv_i, \vv_j)|$.
\end{definition}


Intuitively, F-space $(\sP, d)$ measures the functional differences between programs.
Each program can be represented by a vector in F-space, and if different programs are represented by the same vector $\vv=(y_1, y_2, \dots, y_{|\sI|})$, it indicates that these two programs get the same results for all inputs, and
they are considered to be functionally equivalent.
This is also consistent with Definition~\ref{def:functional equivalence}.
In practice, a space with dimension $|\sI|$ is too large to compute, and thus we utilize the sparsity of F-space and learn an approximate space with dimension reduction by neural networks. 

Regarding input-output examples, representing them by vectors is not appropriate.
Consider a set of input-output examples $\llbracket{e}\rrbracket=\{(x_k, y_k)\}^K$, we cannot find a vector that represents it when $K<|\sI|$ because there are more than one programs that satisfies these $K$ input-output examples.
\hdr{For example, when there is only one input-output example $\llbracket{e}\rrbracket=\{(2, 4)\}$, we cannot distinguish the underlying program $y=2x$ from $y=x^2$.}

Formally, we summarize the properties of the representation of input-output examples in F-space as follows:
\begin{itemize}
\item Each set of input-output examples $\llbracket{e}\rrbracket=\{(x_k, y_k)\}^K$ should be represented by a set of F-space vectors $\llbracket{r}\rrbracket=\{\vv_n\}^N$.
\item For two sets of input-output examples $\llbracket{e}\rrbracket_i$ and $\llbracket{e}\rrbracket_j$, if $\llbracket{e}\rrbracket_i\subseteq\llbracket{e}\rrbracket_j$, then their F-space representations $\llbracket{r}\rrbracket_i$ and $\llbracket{r}\rrbracket_j$ have the relationship $\llbracket{r}\rrbracket_i\supseteq\llbracket{r}\rrbracket_j$.
\item For two sets of input-output examples $\llbracket{e}\rrbracket_i$ and $\llbracket{e}\rrbracket_j$, suppose $\llbracket{e}\rrbracket'=\llbracket{e}\rrbracket_i\cap\llbracket{e}\rrbracket_j$, then in F-space, their corresponding representations have the relationship $\llbracket{r}\rrbracket'=\llbracket{r}\rrbracket_i\cup\llbracket{r}\rrbracket_j$
\item Similarly, if $\llbracket{e}\rrbracket'=\llbracket{e}\rrbracket_i\cup\llbracket{e}\rrbracket_j$, then in F-space, their corresponding representations have the relationship $\llbracket{r}\rrbracket'=\llbracket{r}\rrbracket_i\cap\llbracket{r}\rrbracket_j$
\end{itemize}

\hdr{
Representing input-output examples by sets satisfies all these properties already.
However, to optimize the representation using neural networks, we need to make it continuous and differentiable.
Thus, we model the representation of $\llbracket{e}\rrbracket$ in F-space as a Normal distribution~\citep{Ren2020BetaEF, Sun2019InformationGeometricSE}, and the probability of the distribution indicates the possibility that it is the underlying program to be synthesized given input-output examples $\llbracket{e}\rrbracket$.
We illustrate the projection, the subset relationship, and the union/intersection operation of distributions in Figure~\ref{fig:f-space}.}

\hd{
Additionally, this representation should be differentiable and thus can be optimized by neural networks.
To this end, we model the representation of $\llbracket{e}\rrbracket$ as a Normal distribution~\citep{Ren2020BetaEF, Sun2019InformationGeometricSE}, where the probability of the distribution indicates the possibility that it is the underlying program to be synthesized given input-output examples $\llbracket{e}\rrbracket$.
We illustrate the projection, the subset relationship, and the union/intersection operation of distributions in Figure~\ref{fig:f-space}.}
Under this representation, the query process becomes the process of reducing the uncertainty of the distribution.
\hdr{As described in Section~\ref{sec:overview}, the query process aims to distinguish the underlying program with as few queries as possible.
In F-space, the equivalent description is, the training of the query network aims to find F-space using as few queries as possible.}
To train the query network, we define several neural operators for the neural network training as follows.


\paragraph{Program projection.} 
To project a program $p$ to a vector $\vv$ in F-space, we traditionally use a sequence model as our encoder:
\begin{equation}
\label{equ:encoder p}
\vv = Encoder_p(p)
.
\end{equation}
Note that the program synthesis model differs largely on different datasets, so we choose to vary our program encoder according to the specific program synthesis models on different datasets.
In practice, our program encoder is the reverse of the program synthesis decoder.

\paragraph{Input-output example projection.}
Given a single input-output example $\llbracket{e}\rrbracket=\{(x, y)\}$, we can project it into a Normal distribution using the same architecture of the corresponding program synthesis model, except that we add an MLP to output the two parameters of the Normal distribution: $\vmu$ and $log(\vsigma^2)$.
\begin{equation}
\label{equ:encoder e}
[\vmu, log(\vsigma^2)] = MLP_e(Encoder_e(\llbracket{e}\rrbracket))
,
\end{equation}
where $MLP$ means a multi-layer perceptron.

\paragraph{Input-output examples intersection.}
Given $K$ input-output examples $\llbracket{e}\rrbracket=\{(x_k, y_k)\}^K$, each example $\{(x_k, y_k)\}$ is represented by a Normal distribution $Pr_k = \mathcal{N} (\vmu_k, \vsigma_k)$ using the projector above.
The purpose of the intersection is to aggregate these Normal distributions into a new one, which represents $\llbracket{e}\rrbracket$ in F-space.
Under the assumption of independence, the probability of the intersection distribution should be:
\begin{equation}
Pr_{\llbracket{e}\rrbracket} = \prod_{k=1}^K Pr_k
.
\end{equation}
Fortunately, the product of independent Normal distributions is still a Normal distribution, which means that we can represent it as $[\vmu', log(\vsigma'^2)]$.
In practice, we utilize the attention mechanism with another MLP to let the neural network learn the new Normal distribution~\citep{Ren2020BetaEF}:
\begin{equation}
\label{equ:intersection}
[\vmu', log(\vsigma'^2)] = \sum_{i=1}^K w_i[\vmu_i, log(\vsigma_i^2)]
,
\end{equation}
\begin{equation}
\label{equ:weight}
w_i = \frac{exp(MLP_{attention}([\vmu_i, log(\vsigma_i^2)]))}{\sum_{j=1}^K exp(MLP_{attention}(\vmu_j, log(\vsigma_j^2)))}
.
\end{equation}
This formulation not only keeps the form of distributions closed but also approximates the mode of the effective support of Normal distributions, which satisfies our requirement on the intersection of distribution~\citep{Sun2019InformationGeometricSE}.
\hd{
We give detailed proof on the reasonability of doing this in Appendix~\ref{app:product of normals}.
}

\paragraph{Inverse projection to query.}
Finally, we need to generate a new query from the representation of input-output examples in F-space.
Using the projection and the intersection operation mentioned above, we can project the K input-output examples into the representation $[\vmu', log(\vsigma'^2)]$.
To generate query $x_{new}$ from this representation, we introduce a decoder which has a similar architecture to $Encoder_e$ but reversed:
\begin{equation}
\label{equ:decoder}
x_{new} = Decoder([\vmu', log(\vsigma'^2)])
.
\end{equation}

With these operators, we can model the relevance between input-output examples and programs as the probability in the distributions in F-space, and rewrite the loss in Equation~\ref{equ:loss} as
\begin{equation}
L_{NCE} = -\E[log(\frac{exp(\mathcal{N} (p_n;\vmu', \vsigma'))}{\sum_{i=1}^N exp(\mathcal{N} (p_i;\vmu', \vsigma')})],
\end{equation}

Next, we will introduce how to train the neural network.
\begin{figure}[t]
\begin{center}
\includegraphics[scale=.5]{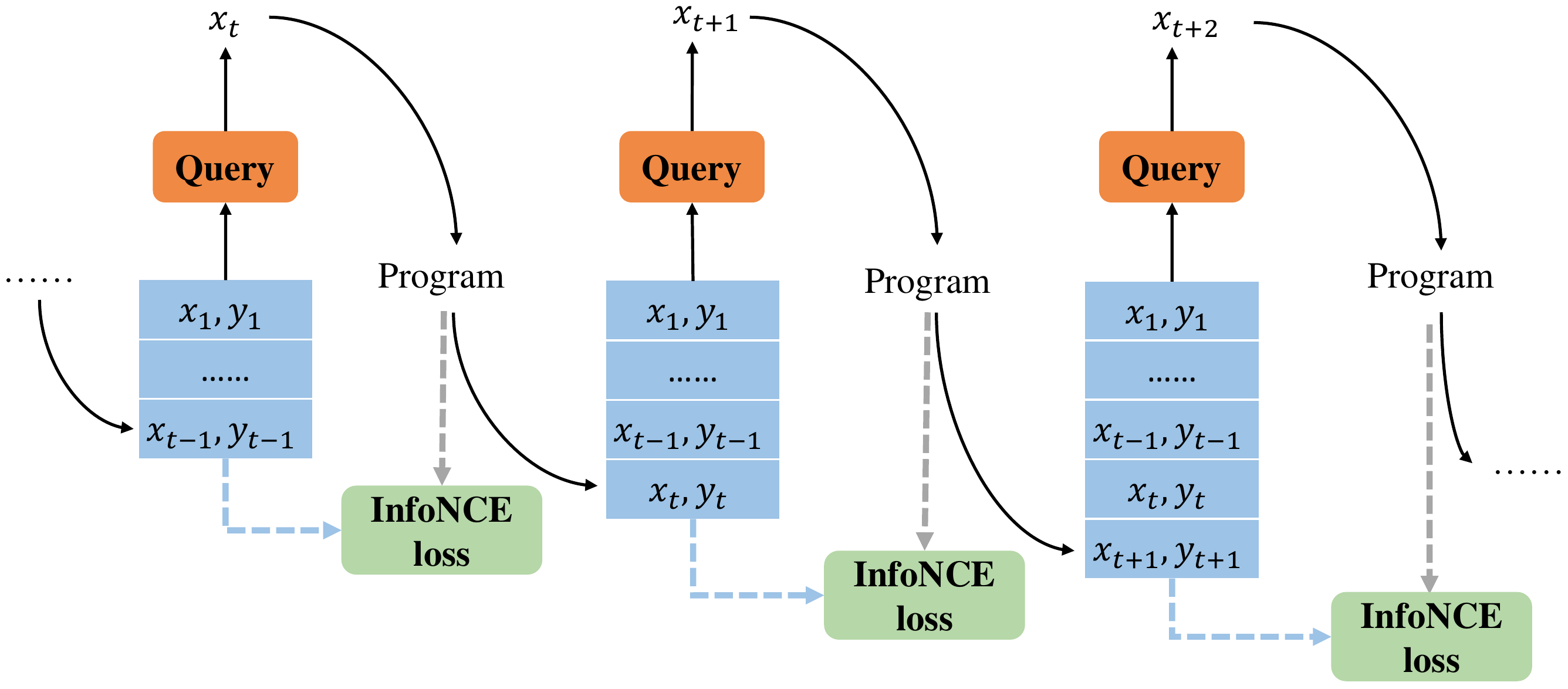}
\end{center}
\caption{The recurrent training process.}
\label{fig:process}
\end{figure}
\subsection{Training}


As mentioned above, the query network is optimized by maximizing the mutual information.
\hd{
We observe that taking previous query examples as the condition and only optimizing the query strategy at the current query step is a greedy optimization, which may fail into the local minima more easily.
An example is shown in Appendix~\ref{app:recurrent}.
Thus, a recurrent training process is adopted to grep the global mutual information on every step instead of the mutual information conditioned on the previous examples.
See Figure~\ref{fig:process} and Algorithm~\ref{alg:train}.
}
\hdr{Furthermore, to generate queries successively, a recurrent training process is used to generate queries progressively.
As definition~\ref{def:query} says, each time a query is generated, we feed it into the underlying program and get a response.
Then, we add the query and response to the input-output examples and send it to the query network to get the next query.
The training process is shown in Figure~\ref{fig:process} and Algorithm~\ref{alg:train} (Appendix~\ref{app:algorithm}).}

\hdr{
To optimize the query network, we utilize the technique in information theory to maximize the mutual information between the input-output examples $\llbracket{e}\rrbracket$ and the corresponding program $p$.
Specifically, for a batch of data $\{(\llbracket{e}\rrbracket_n, p_n), p_n\}^N$, we construct positive pairs $\{(\llbracket{e}\rrbracket_i, p_i)\}$ and negative pairs $\{(\llbracket{e}\rrbracket_i, p_j)|i\neq j\}$.
Then, we use the InfoNCE loss~\citep{Oord2018RepresentationLW} to maximize the lower bound of the mutual information between the input-output examples $\llbracket{e}\rrbracket$ and the corresponding program.
\begin{equation}
\label{equ:loss}
L = -log(\frac{exp(f(\llbracket{e}\rrbracket_i, p_i))}{exp(f(\llbracket{e}\rrbracket_i, p_i)+\sum_{i\neq j}exp(f\llbracket{e}\rrbracket_i, p_j)}),
\end{equation}
where $f(\cdot, \cdot)$ denotes a relevance function.
Intuitively, InfoNCE maximizes the relevance between positive pairs and minimizes the relevance between negative pairs.
In F-space, we choose $f(\llbracket{e}\rrbracket_i, p_j)$ to be the possibility of $p_j$ in the $\llbracket{e}\rrbracket_i$'s distribution.
That is, suppose the neural network project $\llbracket{e}\rrbracket_i$ to $\mathcal{N} (\vmu, log(\vsigma^2))$ and $p_j$ to $\vv$,
then $f(\llbracket{e}\rrbracket_i, p_j) = \mathcal{N} (\vv; \vmu, \vsigma))$, which denotes the probability of $\vv$ under the Normal distribution with parameters $\vmu$ and $log(\vsigma^2)$.}

Additionally, like the task of sequence generation, we need an input-output example to act as the start signal $<sos>$.
However, different datasets differ largely in program synthesis, which makes the design of a universal start signal difficult. Thus, we design specific start signals for each dataset separately.
For example, in Karel, the signal is designed to be an empty map with the robot at the center of the map;
In list processing, the signal is just three lists full of \textit{NULL}.
More training details such as the network architectures, the processing of datasets, training tricks, can be seen in Appendix~\ref{app:A}.

\section{Experiments}
We studied three problems in our experiments.
(1) The metric that is reasonable in our query-based framework.
(2) The performance of the query-based framework.
(3) The generalization ability on different tasks of the query-based framework.
To do this, first, we demonstrate a comparison among several program synthesis metrics.
Then, we present our results of the query on the Karel dataset and the list processing dataset to show our methods' performance and generalization ability.

\subsection{metrics}
\label{sec:metrics}
There are three metrics in neural program synthesis.
\begin{itemize}
\itemsep0.1em
    \item \textbf{Semantics}:
    Given 5 input-output examples, if the predicted program satisfies all these examples, then it is semantically correct.
    \item \textbf{Generalization}: 
    Given 5 input-output examples and a held-out input-output example, if the predicted program satisfies all 6 examples, then it is generally correct.
    \item \textbf{Exact match}:
    If the predicted program is the same as the ground-truth program, then it matches the ground-truth exactly.
\end{itemize}
Among them, the exact match is the most strict metric and generalization is the second one.
In practice, the exact match has its drawback in that the predicted program may be different from the ground-truth program, but they are functionally equivalent.
Thus, the performance is measured by generalization on Karel and semantics on list processing traditionally.
However, generalization is not appropriate here for two reasons:
(1) It can only measure the performance of the program synthesis process instead of the query process.
In the query process, the network chooses input-output examples independently, and judging them by the held-out example is meaningless. 
(2) Even for the program synthesis process without query, generalization is not strict enough.
The program that satisfies a small set of input-output examples may fail on a larger set of input-output examples.
Thus, the best choice is to use \textit{functional equivalence} as our metric.
Unfortunately, the judgment of functional equivalent is a notoriously difficult problem which makes it hard to be applied in evaluation.
To alleviate this problem, we generate 95 held-out input-output examples randomly to achieve a higher branch coverage than 1 held-out example, and make the generalization on these examples a proxy of the functional equivalence:
\begin{itemize}
\itemsep0.1em
    \item \textbf{Functional equivalence (proxy)}:
    Given 5 input-output examples and 95 held-out input-output examples, if the predicted program satisfies all 100 examples, then it is functional equivalent to the ground-truth program.
\end{itemize}
\begin{wraptable}{r}{.53\textwidth} 
\begin{center}
\resizebox{.53\textwidth}{!}{
\begin{tabular}{l|lll}
& \multicolumn{1}{c}{\bf Semantics} & \multicolumn{1}{c}{\bf Generalization} &\multicolumn{1}{c}{\bf FE}
\\ \hline &&&\\
        Branch coverage & 86.57\% & 87.99\% & 97.58\% \\
\end{tabular}}
\end{center}
\caption{The branch coverage of semantics, generalization and functional equivalence (FE).}
\label{table:branch coverage}
\end{wraptable}
To illustrate the difference between functional equivalence and generalization further, we measured the average branch coverage of these two metrics on Karel's validation set.
Given a set of input-output examples, branch coverage evaluates the percentage of program branches that are covered by the examples.

The result is shown in Table~\ref{table:branch coverage}.
Functional equivalence outperforms generalization by nearly 10\%, which indicates that functional equivalence can represent the correctness of the predicted program much better than generalization.


\subsection{Karel task}
Karel is an educational programming language used to control a robot living in a 2D grid world~\citep{pattis1981karel}.
The domain-specific language (DSL) and other details are included in Appendix~\ref{app:karel}.

\paragraph{Settings.}
Following Section~\ref{sec:methods}, we split the training process into the training of the query network and the training of the synthesis network.
For the query network, we set $Encoder_e$ the same as the one of \cite{Bunel2018LeveragingGA} except for the output layer, and $Encoder_p$ a two-layer LSTM (see details in Appendix~\ref{app:train}).
For the synthesis network, all settings stay unchanged as in the original program synthesis method.
We report the top-1 results with beam size 100 for \cite{Bunel2018LeveragingGA} and 64 for \cite{Chen2019ExecutionGuidedNP} which is the default setting for validation in the original code.
As in Section~\ref{sec:metrics}, exact match and functional equivalence are more appropriate than other metrics in the query task, so we save the checkpoints based on the best exact match instead of the best generalization (for the computation inefficiency of functional equivalence), which may cause differences in our baseline reproduction.

We generate the dataset with 5 input-output examples using three different methods: randomly selected (random), baseline models (well-designed), and our method (query).
Then, we train the synthesis network on these datasets with two state-of-the-art methods: \cite{Bunel2018LeveragingGA} and \cite{Chen2019ExecutionGuidedNP}.
Note that there is a slight difference between the program simulators used by them which will give different responses during the query process, and we choose the one in \cite{Bunel2018LeveragingGA} as ours.


\paragraph{\hd{Dataset} Performance.}
Table~\ref{table:karel query} presents the performance of the trained synthesis networks, from which we can conclude that 
(1) The queried dataset performs well on both two training methods, indicating that the query process is totally decoupled with the synthesis process and has a high generality.
(2) Our query method gets comparable results with both baseline methods and largely outperforms the random selection method with more than 10\%, which shows its effectiveness.
\begin{wrapfigure}{r}{.59\textwidth} 
\begin{center}
\includegraphics[scale=.29]{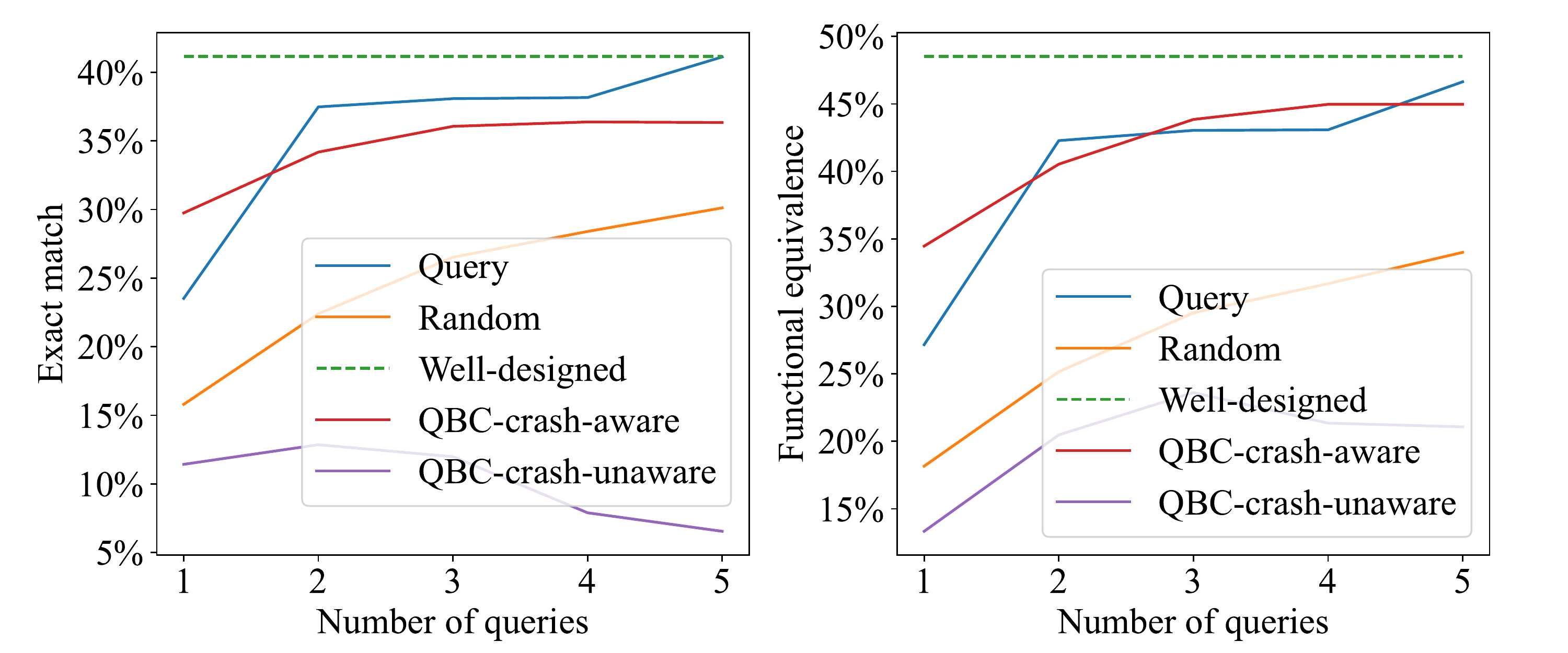}
\end{center}
\caption{The query performance of different methods.}
\vspace{-1mm}
\label{fig:progressive}
\end{wrapfigure}
\begin{table}[t]
\vspace{-1mm}
\caption{The performance of the program synthesis model trained on input-output examples generated by different methods on the Karel task.}
\vspace{-5mm}
\begin{center}
\resizebox{\textwidth}{!}{
\begin{tabular}{l|lll|lll}
\multirow{2}*{\bf Metric} & \multicolumn{3}{c|}{\bf \cite{Bunel2018LeveragingGA}} & \multicolumn{3}{c}{\bf \cite{Chen2019ExecutionGuidedNP}}\\
&\multicolumn{1}{c}{\bf Random} &\multicolumn{1}{c}{\bf Well-designed} &\multicolumn{1}{c|}{\bf Query} &\multicolumn{1}{c}{\bf Random} &\multicolumn{1}{c}{\bf Well-designed} &\multicolumn{1}{c}{\bf Query}
\\ \hline &&&&&&\\
        Exact match & 29.44\% & 41.16\% & 41.12\% & 26.08\% & 37.36\% & 38.68\% \\
        Functional equivalence & 33.08\% & 48.52\% & 46.64\% & 32.28\% & 47.60\% & 48.48\% \\
\end{tabular}
}
\end{center}
\label{table:karel query}
\end{table}
\paragraph{Comparison on query.}
We also compared our method with query by committee (QBC)~\cite{DBLP:conf/colt/SeungOS92} as another baseline, shown in Figure~\ref{fig:progressive}.
In QBC, we sample queries based on their diversity.
That is, we generate program candidates by beam search, and then select the query that can result in the most diverse outputs on these program candidates.
The diversity is measured by the output equivalence.
Algorithm details can be seen in Appendix~\ref{app:qbc}.
There are two strategies based on QBC: Crash-aware, which repeats the algorithm to sample another one if the query crashes; And crash-unaware, which samples queries regardless of the crash problem (see Appendix~\ref{app:karel} for a detailed explanation of crashes).
QBC-crash-unaware performs worse than Random because Random is chosen by filtering crashed IOs while QBC-crash-unaware may contain crashes.
QBC-crash-aware performs much better than QBC-crash-unaware because it queries the underlying program multiple times to make sure that the query will not result in a crash, which is unfair.
Even though, out method still outperforms QBC-crash-aware, which shows its advantage.

\paragraph{Training process.}

\begin{figure}[t]
\begin{center}
\includegraphics[scale=.43]{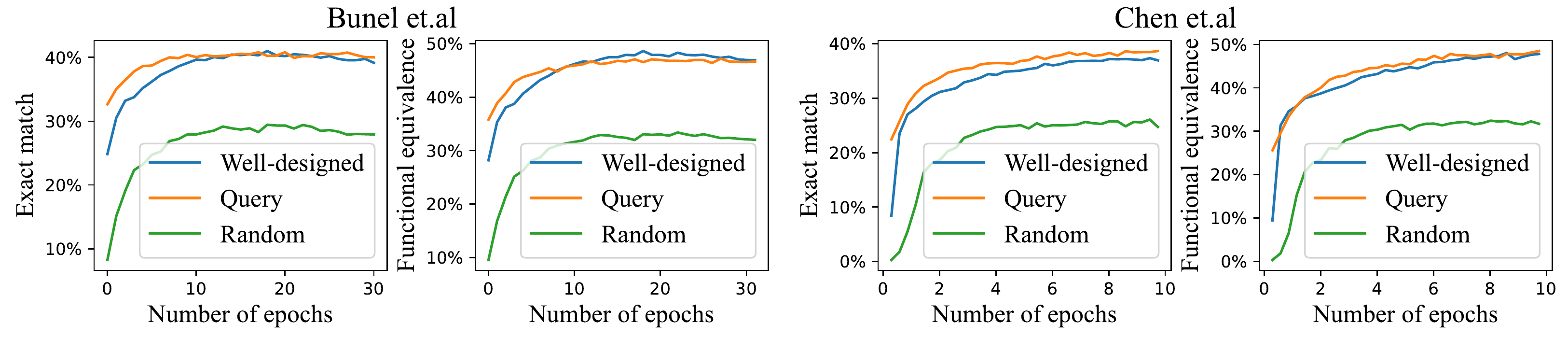}
\end{center}
\caption{The training curve of the program synthesis model on Karel.}
\label{fig:training curve}
\end{figure}
\begin{table}[t]
\caption{The performance of the program synthesis model trained on input-output examples generated by different methods on the list processing task.}
\begin{center}
\resizebox{\textwidth}{!}{
\begin{tabular}{l|l|lll|lll}
\multirow{2}*{\bf Dataset} &\multirow{2}*{\bf Metric} & \multicolumn{3}{c|}{\bf Searching for semantics} & \multicolumn{3}{c}{\bf Searching for exact match}\\
&&\multicolumn{1}{c}{\bf Random} &\multicolumn{1}{c}{\bf Well-designed} &\multicolumn{1}{c|}{\bf Query} &\multicolumn{1}{c}{\bf Random} &\multicolumn{1}{c}{\bf Well-designed} &\multicolumn{1}{c}{\bf Query}
\\ \hline &&&&&&\\
        \multirow{2}*{$D_1$} & Exact match & 20.07\% & 32.81\% & 22.26\% & 50.65\% & 81.21\% & 81.56\% \\
        & Functional equivalence & 52.03\% & 80.26\% & 68.20\% & 50.65\% & 81.21\% & 81.56\% \\ 
        \hline &&&&&&\\
        \multirow{2}*{$D_2$} & Exact match & 9.25\% & 15.72\% & 10.44\% & 22.61\% & 38.51\% & 38.63\% \\
        & Functional equivalence & 24.49\% & 39.88\% & 32.94\% & 22.61\% & 38.51\% & 38.63\% \\
\end{tabular}
}
\end{center}
\label{table:list query}
\end{table}
To do a further study, we plot the training curve in Figure~\ref{fig:training curve}.
It is shown that the Query dataset always has a quick start, which indicates that the query method can extract the features of the distinguishable programs effectively and makes them easier to be synthesized.

\subsection{List Processing Task}
To show the generalization ability of the query method, we conduct another experiment on the list processing task.
The list processing task takes 1-3 lists or integers as the input example, and then produces a list or an integer as the output example. More details can be seen in Appendix~\ref{app:list}.
\paragraph{Settings.}
Following PCCoder~\citep{Zohar2018AutomaticPS}, we generate two datasets with program length 4 as dataset $D_1$ and program length up to 12 as dataset $D_2$.
We set the $Encoder_e$ of the query network similar to PCCoder except for an MLP, and use a single layer LSTM as the $Encoder_p$ (see Appendix~\ref{app:train}).
The synthesis network and the parameters of complete anytime beam search (CAB)~\citep{Zhang1998CompleteAB} stay the same as PCCoder, except that the maximum time is set to 5 seconds instead of 5,000 seconds for reality.

Similar to the Karel task, we generate the dataset with 5 input-output examples with three different methods.
Furthermore, we use two end conditions of CAB: The searching ends when all input-output examples are satisfied (searching for semantics), and the searching ends when all statements are true (searching for exact match).
\paragraph{Performance.}
The results are presented in Table~\ref{table:list query},
from which we can conclude that:
(1) Our query method results higher than well-designed input-output examples in searching for the exact match and consistently outperforms the random.
(2) When the length of programs increases, the performance decreases largely.
This results from the original algorithm of PCCoder that the intermediate variables are difficult to be saved when the ground-truth program is long.
However, the query method decreases slower than others, and the gap between the well-designed and the query is closed largely.


\section{Related Work}
\paragraph{Programming by examples.}
Synthesizing a program that satisfies the user intent using the provided input-output examples is a challenging problem that has been studied for years~\citep{Manna1971TowardAP, Lieberman2001YourWI, SolarLezama2006CombinatorialSF, Gulwani2011AutomatingSP, Gulwani2012SpreadsheetDM}.
Recently, with the development of Deep Learning, more researchers tend to tackle this problem with neural networks in a variety of tasks, including string transformation~\citep{Devlin2017RobustFillNP}, list processing~\citep{Balog2017DeepCoderLT, Zohar2018AutomaticPS}, graphic generation~\citep{Ellis2018LearningTI, Tian2019LearningTI}, Karel~\citep{Devlin2017NeuralPM, Bunel2018LeveragingGA}, policy abstraction~\citep{Sun2018NeuralPS, Verma2018ProgrammaticallyIR} and so on.
Additionally, techniques like program debugger~\citep{Balog2020NeuralPS, Gupta2020SynthesizeEA}, traces~\citep{Chen2019ExecutionGuidedNP, Shin2018ImprovingNP, Ellis2019WriteEA}, property signatures~\citep{Odena2020LearningTR, Odena2021BUSTLEBP} are also used to improve the performance of program synthesis.
However, their promising performance relies on the well-designed input-output examples which is a high requirement for users.
If the examples provided are of poor quality, the performance will be affected severely.
Worse, if out-of-distribution examples are provided, the program synthesis model cannot finish the task as expected~\citep{Shin2019SyntheticDF}.

\paragraph{Interactive program synthesis.}
Considering the high requirement of input-output examples, \cite{Le2017InteractivePS} build an abstract framework in which the program synthesis system can interact with users to guide the process of synthesis.
Following this framework, multiple interactive synthesis systems are studied.
Among them, 
\cite{Mayer2015UserIM}, \cite{Wang2017InteractiveQS}, and \cite{Laich2020GuidingPS} tend to find counter-examples as queries to the users in a random manner.
Although they get rid of the restrictions on the users, the quantity of the input-output examples is not guaranteed, which may get the synthesis process into trouble.
\cite{Padhi2018FlashProfileAF} select more than one query each time and let the users choose which one to answer.
This brings an additional burden on the users, and the users are unaware of which query will improve the program synthesized most.
Most recently, \cite{Ji2020QuestionSF} utilizes the minimax branch to select the question where the worst answer gives the best reduction of the program domain.
Theoretically, the performance of the selection is guaranteed.
However, this method is based on search and hardly be applied to tasks with large input-output space.
Worse, the query process and the synthesis process are bound, which results in its poor scalability.
In contrast, our method \hd{can be applied to problems with large spaces}, and the query process is decoupled with the synthesis process, which makes its application more flexible.

\paragraph{Learning to acquire information.}
Similar to our work, \cite{Pu2018SelectingRE} and \cite{DBLP:conf/uai/PuKS17} also study the query problem from an information-theoretic perspective.
They show that maximizing the mutual information between the input-output examples and the corresponding program greedily is $1-\frac{1}{e}$ as good as the optimal solution that considers all examples globally.
However, they assume that the space of queries can be enumerated, which limits the application of their query algorithm on complex datasets like Karel.
By comparison, our work proposes a more general algorithm that can generate queries in a nearly infinite space.
Other related work including active learning and black-box testing can be seen in Appendix~\ref{app:related work}.

\section{Conclusion}
In this work, we propose a query-based framework to finish the program synthesis task more realistically.
\hd{To optimize this framework, we show the correlation between the query strategy and the mutual information.}
Moreover, we model the relevance between the input-output examples and programs by introducing the F-space,
where we represent the input-output examples as the distribution of the programs.
Using these techniques, we conduct a series of experiments that shows the effectiveness, generalization, and scalability of our query-based framework. 
We believe that our methods work not only on the program synthesis tasks, but also on any task that aims to simulate an underlying oracle, including reverse engineering, symbolic regression, scientific discovery, and so on.

\subsubsection*{Acknowledgments}
This work is partially supported by the National Key Research and Development Program of China (under Grant 2020AAA0103802), the NSF of China (under Grants 61925208, 62102399, 62002338, 61906179, 61732020, U19B2019), Strategic Priority Research Program of Chinese Academy of Science (XDB32050200), Beijing Academy of Artificial Intelligence (BAAI) and Beijing Nova Program of Science and Technology (Z191100001119093), CAS Project for Young Scientists in Basic Research (YSBR-029), Youth Innovation Promotion Association CAS and Xplore Prize.

\bibliography{iclr2022_conference}
\bibliographystyle{iclr2022_conference}

\newpage
\appendix
\section{Training Details}
\label{app:A}
\subsection{Training algorithm}
\label{app:algorithm}
\begin{algorithm}[ht]
    \caption{Training process}
    \label{alg:train}
    \begin{algorithmic}[1]
        \Function{$\textsc{Train}$}{ }
            \State Initialize max iterations $N$ and max query times $T$
            \For{$ i \in \{1 \ldots N \}$}
                \State $p \gets Underlying\ programs$
                \State $(x_0, y_0) \gets <sos>$
                \State $\llbracket{e}\rrbracket \gets \{(x_0, y_0)\}$
                \State $L \gets 0$
                \For{$ t \in \{1 \ldots T \}$}
                    \State $\vv \gets Encoder_p(p)$ 
                    \Comment{Encode program, Equation~(\ref{equ:encoder p})}
                    \State $\vmu, log(\vsigma^2) \gets \textsc{IO-Encoder}(\llbracket{e}\rrbracket)$
                    \Comment{Encode input-output examples}
                    \State $x_t \gets Decoder(\vmu, log(\vsigma^2))$
                    \Comment{Query next input}
                    \State $y_t \gets p(x_t)$
                    \Comment{Get next output from the oracle}
                    \State $\llbracket{e}\rrbracket \gets \llbracket{e}\rrbracket \cup \{(x_t, y_t)\}$
                    \State $\vmu', log(\vsigma'^2) \gets \textsc{IO-Encoder}(\llbracket{e}\rrbracket)$
                    \State $L \gets Loss(\vv, \vmu', log(\vsigma'^2)) + L$
                    \Comment{InfoNCE loss, Equation~(\ref{equ:loss})}
                \EndFor
            \State Update parameters w.r.t. $L$
            \EndFor
        \EndFunction
    \end{algorithmic}
    \hfil
    \begin{algorithmic}[1]
        \Function{$\textsc{IO-Encoder}$}{$\{x_k, y_k\}^K$}
            \For{$ i \in \{1 \ldots K\}$}
                \State $[{\vmu}_i, {log(\vsigma_i^2)}] \gets MLP_e(Encoder_e(\{x_i, y_i\})$ 
                \Comment{Encode single example, Equation~(\ref{equ:encoder e})}
            \EndFor
            \For{$ i \in \{1 \ldots K\}$}
                \State $w_i \gets \frac{exp(MLP_{attention}([\vmu_i, log(\vsigma_i^2)]))}{\sum_{j=1}^K exp(MLP_{attention}(\vmu_j, log(\vsigma_j^2)))}$
                \Comment{Calculate attentions, Equation~(\ref{equ:weight})}
            \EndFor
            \State $[\vmu, log(\vsigma^2)] = \sum_{i=1}^K w_i[\vmu_i, log(\vsigma_i^2)]$ \Comment{Intersection, Equation~(\ref{equ:intersection})}
            \State \Return $\vmu, log(\vsigma^2)$
        \EndFunction
    \end{algorithmic}
\end{algorithm}

\subsection{Model details and Hyper parameters}
\label{app:train}
\paragraph{Karel.}
The query encoder is the same as the one used by \cite{Bunel2018LeveragingGA} composed of a Convolutional Neural Network (CNN) with the residual link. 
After the query encoder, there is an MLP to project the embedding into $\vmu, log(\vsigma^2)$.
The dimension of $\vmu$ and $\vsigma$ is set to 256, and thus the hidden size of MLP is 512.
The query decoder is similar to the query encoder except that the number of channels is reversed, and additional batch normalization is added to keep the generation more stable.
The program encoder is a two-layer LSTM with a hidden size of 256.

\begin{figure}[t]
\begin{center}
\includegraphics[width=\textwidth]{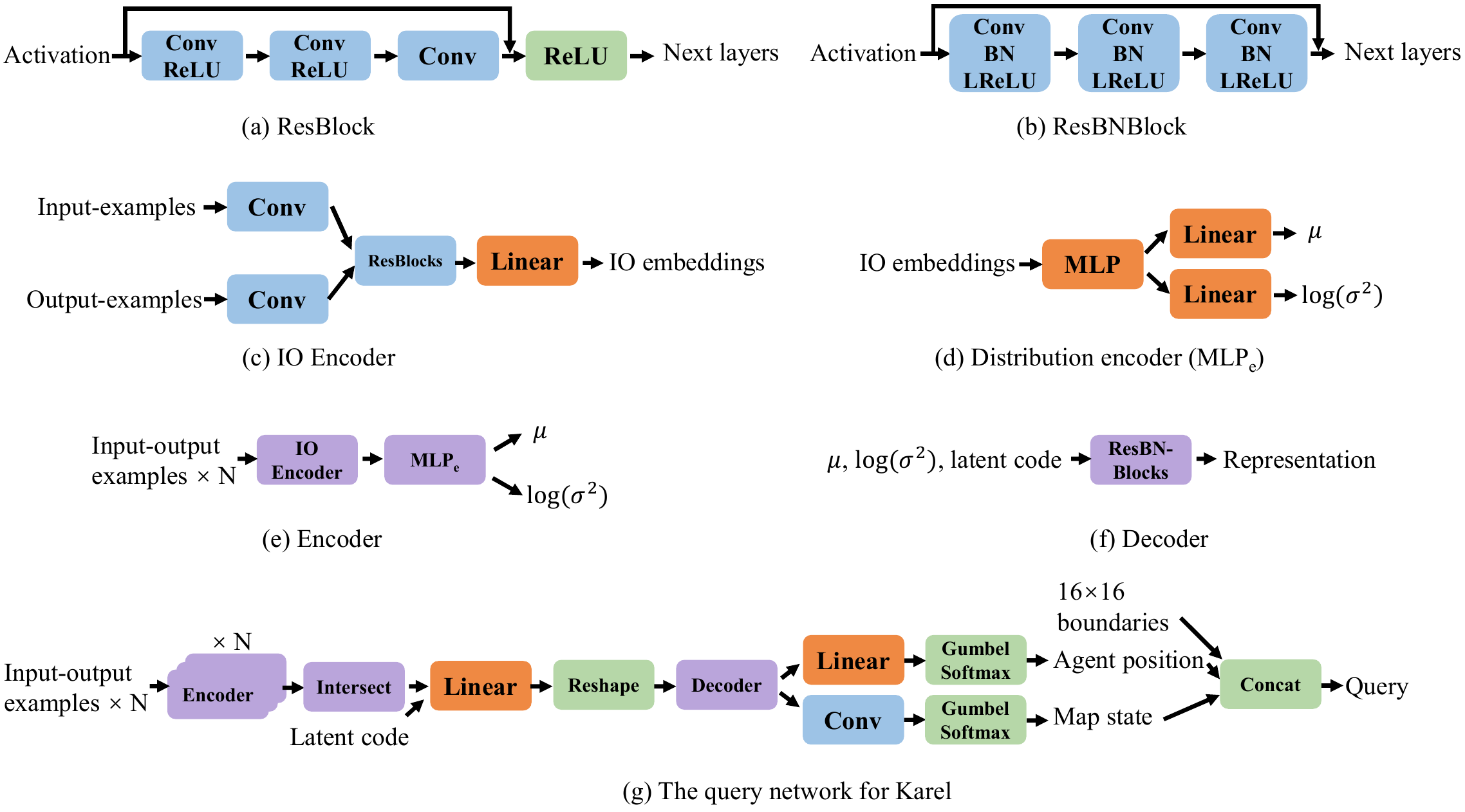}
\end{center}
\caption{The architecture of the query network for Karel (zoom in for a better view).}
\label{fig:query network karel}
\end{figure}

Note that in the well-designed dataset, the size of the grid world can be changed from 2 to 16.
However, for the convenience of training, we set the size of the query world fixed to 16.
Moreover, to guarantee that all queries can be recognized by the program simulator, we split the query into three parts: boundaries (the map size, set to 16$\times$16), agent position (where the agent is and towards), and map state (the placement of markers and obstacles), and generate them as follows:
\begin{itemize}
\itemsep0.1em
    \item 
    \textbf{Boundaries}: The boundaries indicate the map size, fixed to $16\times16$.
    \item
    \textbf{Agent position}: generate a $4\times16\times16$ one-hot vector where $4$ indicates four facing directions and $16\times16$ indicates the position.
    \item
    \textbf{Map state}: generate a $12$ one-hot vector on the $16\times16$ map including obstacle, 1-10 markers, and empty grid.
\end{itemize}
The architecture details are shown in figure~\ref{fig:query network karel}.

For training, the learning rate of the query network is set to $10^{-4}$ with the Adam optimizer~\cite{kingma2014adam} while the learning rate of the synthesis network stays the same with the original methods.
The batch size is 128, and the random seed is set to 100.

\paragraph{List processing.}
The query encoder is the same as the one in PCCoder. 
The dimension of $\vmu$ and $\vsigma$ is set to 256 for dataset $D_1$ and 128 for dataset $D_2$ without much tuning.
The query decoder is a single-layer linear network with dimension 256 for dataset $D_1$ and 128 for dataset $D_2$.
The program encoder is a single layer LSTM with hidden size 256 for dataset $D_1$ and 128 for dataset $D_2$.
\begin{figure}[t]
\begin{center}
\includegraphics[width=\textwidth]{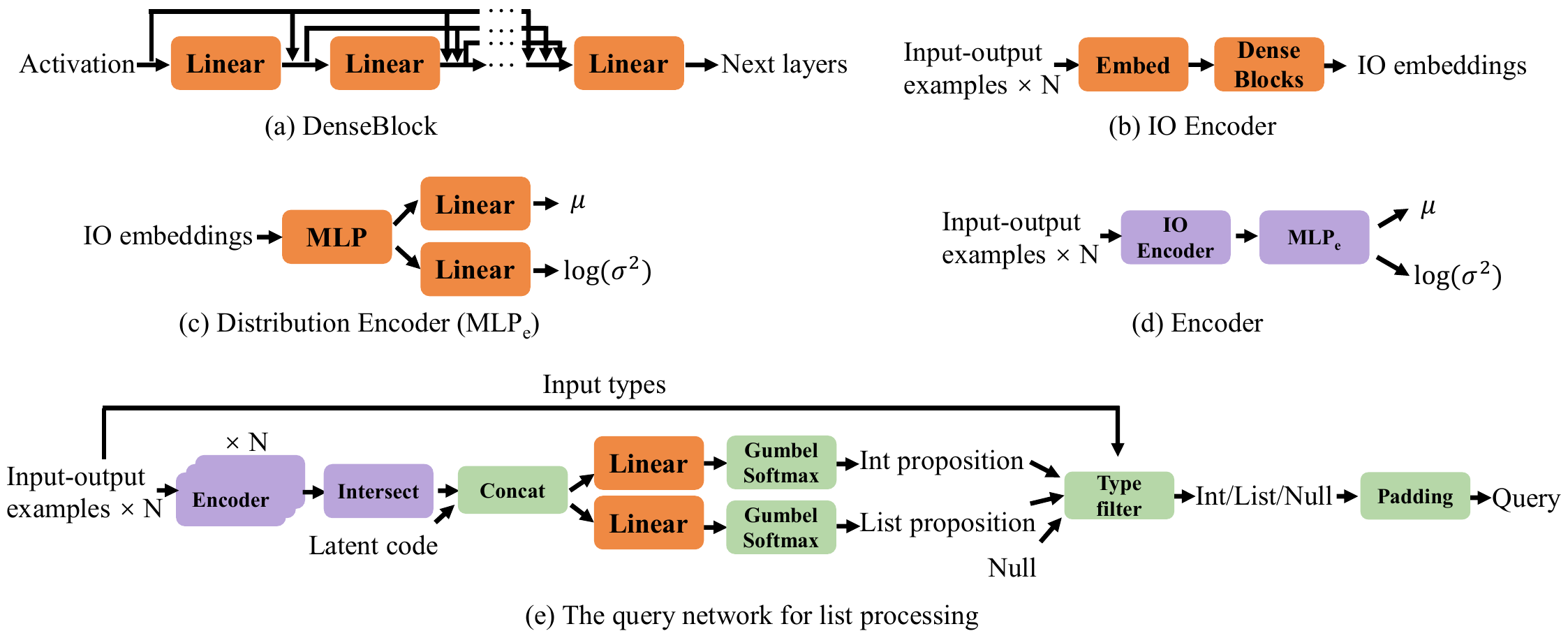}
\end{center}
\caption{The architecture of the query network for list processing (zoom in for a better view). Type filter chooses query between the list proposition and int proposition depends on the input types.}
\label{fig:query network list}
\end{figure}

The inputs of list processing consist of three types: INT, LIST, and NULL.
Thus, each query is NULL or an integer or a list consisting of integers in the range of $[-256, 255]$.
Each integer is represented by a $512$ one-hot vector.
For the well-designed input-output examples, the length of a LIST is sampled stochastically with the maximum length of 20.
However, to make the query network simpler, we fix the length of queries to 20.
The query network generates all three types separately by different networks and chooses among them according to the type of inputs using a type filter.

The details of the query network for list processing are shown in figure~\ref{fig:query network list}.

For training, the learning rate of the query network is set to $10^{-4}$ with a $0.1$ decay every 40 epochs and the Adam optimizer~\cite{kingma2014adam}.
The learning rate of the synthesis network is $10^{-3}$, which stays the same with the original methods with $0.1$ decay every 4 epochs.
The batch size of the query process is 64, the batch size of synthesis is 32 for $D_1$ and 100 for $D_2$.
The random seed is set to 100.

\subsection{Other training techniques}
\paragraph{Latent code.}
The subsequent queries are easy to fail into the mode collapse, generating similar queries repetitively.
To tackle this problem, we introduce a latent code, like the one in InfoGAN~\citep{Chen2016InfoGANIR}, as another input, which indicates the current query step, and ask the encoder in the next query step to decode it accurately.
Specifically, given a latent code $c$ which indicates the current query step, the query network is supposed to maximize the mutual information between $c$ and the output query $x$ to alleviate the mode collapse problem.
To generate $x$ under the condition of $c$, we concatenate $c$ with the encoded representation $\mu$ and $log(\sigma^2)$, and then send it to the decoder (Equation~(\ref{equ:decoder})), shown in Figure~\ref{fig:query network karel}(g) and Figure~\ref{fig:query network list}(e).
To maximize the mutual information $I(c;x)$, we design a network aiming to classify $c$ from $x$, which models the distribution $Q(c|x)$ (details can be seen in \cite{Chen2016InfoGANIR}).
This network shares its parameters with the $Encoder_e$ (Equation~(\ref{equ:encoder e})) except for the last layer, and it is optimized with the cross-entropy loss.

\paragraph{Gumbel-Softmax.}
Note that the queries are discrete, which makes the query network cannot be optimized by back-propagation.
Thus, we take advantage of the Gumbel-Softmax distribution~\citep{Maddison2017TheCD, Jang2017CategoricalRW} and make the query process differentiable.

\paragraph{Curriculum learning.}
The query network is trained progressively.
At the beginning of the training, the query network generates one query only.
As the training goes on, the limit of the number of queries increases until it achieves five.
Specifically, this limit increases every two epochs.
Curriculum learning helps to make the training process more stable.

\paragraph{Kullback-Leibler (KL) divergence.}
A failed attempt.
Similar to the latent code, we tried to add the reciprocal of the KL divergence between different queries on the same program as another loss to make the queries more diverse. 
However, this loss does not have a significant impact on training most of the time and makes the training process unstable,
and thus it is abandoned in our final version.

\begin{figure}[t]
  \begin{center}
    \begin{eqnarray*}
      \mbox{Prog } p & := & \mathtt{def }\ \mathtt{run}()\;:\ s \\
      \mbox{Stmt } s & := & \mathtt{while}(b): s \; |\  \mathtt{repeat}(r): s \;
                            |\ s_1; s_2 \; | \; a \\
                     & | & \mathtt{if}(b): s \; | \; \mathtt{ifelse}(b): s_1\ \mathtt{else}: s_2 \\
      \mbox{Cond } b & := & \mathtt{frontIsClear()} \; | \; \mathtt{leftIsClear()} \; | \; \mathtt{rightIsClear()} \\
                     & | & \mathtt{markersPresent()} \; | \;
                           \mathtt{noMarkersPresent()} \ | \ \; \mathtt{not} \; b  \\
      \mbox{Action a} & := & \mathtt{move()} \; | \; \mathtt{turnRight()} \; | \; \mathtt{turnLeft()} \\
                     & | & \mathtt{pickMarker()} \; | \; \mathtt{putMarker()} \\
      \mbox{Cste } r & := & 0 \; |\ 1 \; | \ \dotsb \; |\  19
    \end{eqnarray*}
  \end{center}
  \caption{The DSL of Karel}
  \label{fig:karelDSL}
\end{figure}

\begin{table}[t]
  \centering
  \begin{tabular}{|c|}
    \hline
    Hero facing North \\
    \hline
    Hero facing South \\
    \hline
    Hero facing West \\
    \hline
    Hero facing East \\
    \hline
    Obstacle \\
    \hline
    Grid boundary \\
    \hline
    1 marker \\
    \hline
    2 marker \\
    \hline
    3 marker \\
    \hline
    4 marker \\
    \hline
    5 marker \\
    \hline
    6 marker \\
    \hline
    7 marker \\
    \hline
    8 marker \\
    \hline
    9 marker \\
    \hline
    10 marker \\
    \hline
  \end{tabular}
  \caption{Representation of a cell in grid world}
  \label{table:grid representation}
\end{table}

\begin{figure}[t]
\begin{center}
\includegraphics[scale=.3]{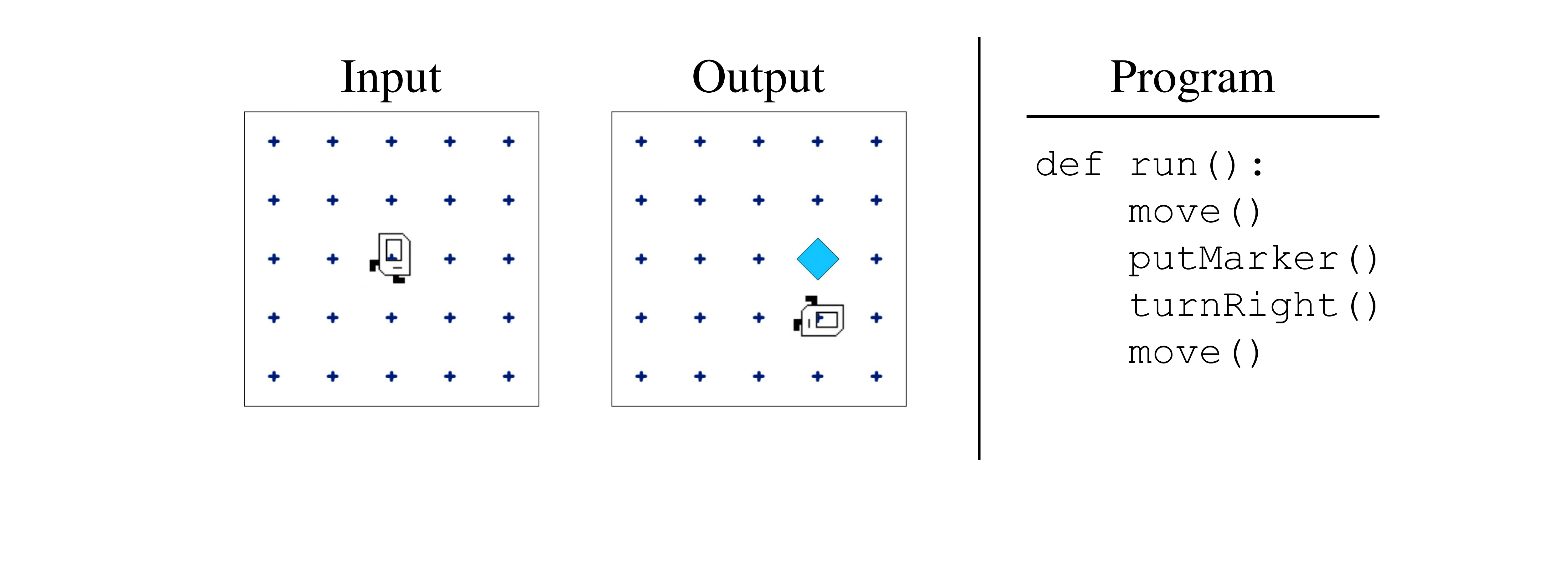}
\end{center}
\caption{An example of Karel.}
\label{fig:karel}
\end{figure}

\section{Experiment Details}
\label{experiment}
\subsection{The Karel task}
\label{app:karel}
Karel is an educational programming language, which can be used to control a robot to move in a 2D grid world \citep{pattis1981karel}. Figure~\ref{fig:karelDSL} presents the domain language specification (DSL) of Karel. 
Figure~\ref{fig:karel} shows a Karel program example.
Each cell of the grid world is represented as a 16-dimensional vector corresponding to the features described in Table~\ref{table:grid representation} \citep{Bunel2018LeveragingGA}.

\paragraph{Handling of crashes.}
The executor of Karel will get a "CRASH" result and then terminates if the agent:
\begin{itemize}
\itemsep0.1em
    \item
    Picks a marker while no marker is presented.
    \item
    Puts a marker down while the number of the markers in the cell exceeds the limit (the limit is set to 10).
    \item 
    Walks up an obstacle or out of boundaries.
    \item
    Falls into an infinite loop (the loop with more than $10^5$ API calls).
\end{itemize}
In the early stage of training, the query network generates the queries randomly, and thus the programs run into these crashed states easily, making the queries cannot be applied to training.
To avoid these situations, we modify the executor so that when crashes happen, the state stays still without change and the program keeps executing.

\subsection{The List Processing Task}
\label{app:list}
The list processing task takes 1-3 lists or integers as input examples and produces a list or an integer as the output example.
An example is shown in Figure~\ref{fig:list}.
Figure~\ref{fig:list processing DSL} shows the DSL of list processing.
Following \cite{Zohar2018AutomaticPS}, we generate two datasets: $D_1$ with program length 4 and $D_2$ with program length up to 12.

\paragraph{Handling of the out of range problem.}
Constraint propagation guarantees that the execution of programs never obtains intermediate values that are out of range $[-256, 255]$.
However, when we query randomly in the early stage of training, the out-of-range problem can easily occur.
To handle this problem, we truncate the intermediate values to $[-256, 255]$ while querying to ensure that all queries can yield legal responses.

\begin{figure}[t]
  \begin{center}
    \begin{eqnarray*}
      \mbox{Basic Function} & := & \mathtt{+1} \ | \ \mathtt{-1} \ | \ \mathtt{\times 2} \ | \ \mathtt{\div 2} \ | \ \mathtt{\times (-1)} \\ 
                           & | & \mathtt{**2} \ | \ \mathtt{\times 3} \ | \ \mathtt{\div 3} \ | \ \mathtt{\times 4} \ | \ \mathtt{\div 4} \\ 
                           & | & \mathtt{>0} \ | \ \mathtt{<0} \ | \ \mathtt{\% 2} \ | \ \mathtt{\% 2 == 1} \\
      \mbox{First-order Function} & := & \mathtt{HEAD} \ | \ \mathtt{LAST} \ | \ \mathtt{TAKE} \ | \  \mathtt{DROP} \ | \ \mathtt{ACCESS} \\
                        & | & \mathtt{MINIMUM} \ | \ \mathtt{MAXIMUN} \ | \ \mathtt{REVERSE} \ | \ \mathtt{SORT} \ | \ \mathtt{SUM} \\
      \mbox{Higher-order Function} & := & \mathtt{MAP} \ | \ \mathtt{FILTER} \ | \ \mathtt{COUNT} \ | \ \mathtt{ZIPWITH} \ | \ \mathtt{SCANL1} \\
    \end{eqnarray*}
  \end{center}
  \caption{The DSL of list processing}
  \label{fig:list processing DSL}
\end{figure}

\begin{figure}[t]
\begin{center}
\includegraphics[scale=.3]{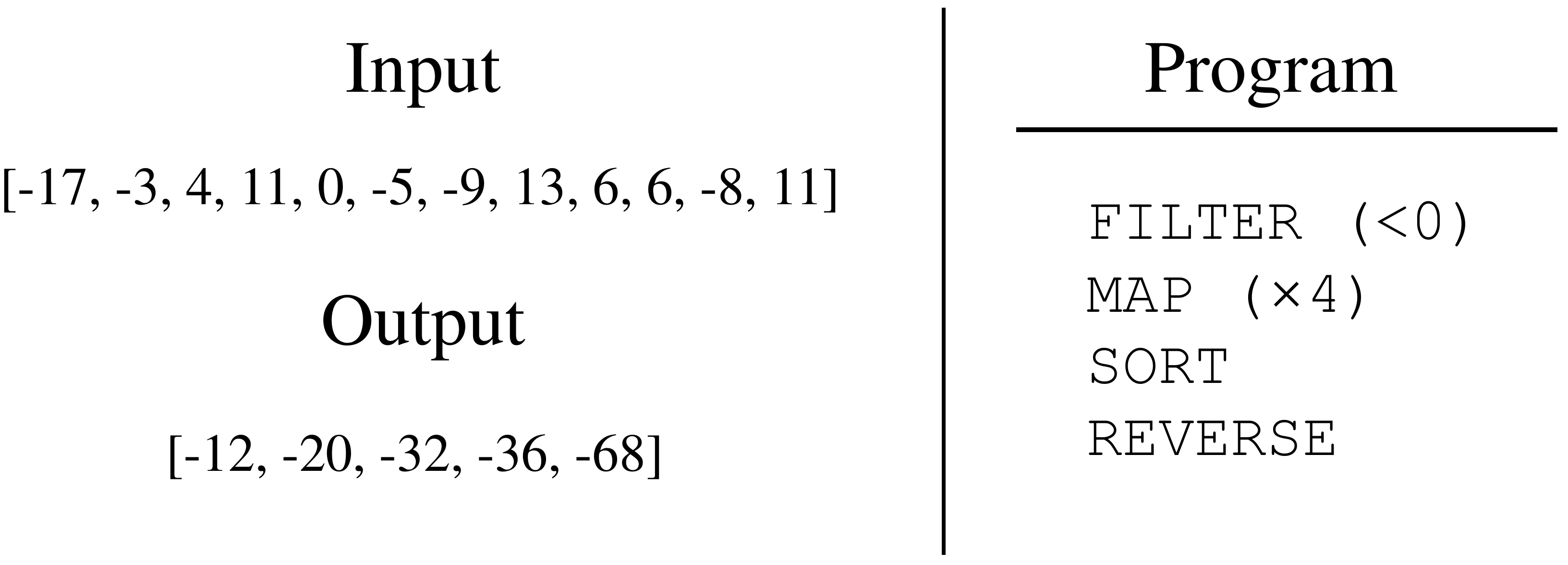}
\end{center}
\caption{An example of list processing.}
\label{fig:list}
\end{figure}

\subsection{The evolution of distribution entropy}
To make a sanity check of the F-space formulation, we study how the entropy of the distribution changes over the step of the query.

The entropy of a multivariate Normal distribution $\mathcal{N} (x; \mu, \Sigma))$ is given by 
\begin{equation}
\begin{aligned}
H(x)=\frac{1}{2}ln|\Sigma|+\frac{D}{2}(1+ln(2\pi))
,
\end{aligned}
\end{equation}
where $\Sigma$ denotes the covariance matrix and $D$ denotes the dimension of $x$.
In our case, we have assumed the independence of each dimension of $x$ which means that $\Sigma$ is a diagonal matrix.
Hence 
\begin{equation}
\begin{aligned}
\frac{1}{2}ln(|\Sigma|) 
= \frac{1}{2}ln(\prod_i\sigma_i^2) 
= \frac{1}{2}\sum_i ln(\sigma_i^2)
, 
\end{aligned}
\end{equation}
where $\sigma_i^2$ is the diagonal element of $\Sigma$, indicating the variance of each dimension.
$D$ is a constant during querying, so we calculate the mean of $log(\vsigma^2)$ as an equivalent substitute to show the change of the entropy.
Results are shown in Figure~\ref{fig:sigma}.
In our experiments, Karel performs best and this is also revealed by the entropy that Karel decreases much faster than list processing. 
On the contrary, the entropy of list processing decreases slowly and has worse performance in our experiment.
This performance may be increased by tuning the query network more carefully.

\begin{figure}[t]
\begin{center}
\includegraphics[width=\textwidth]{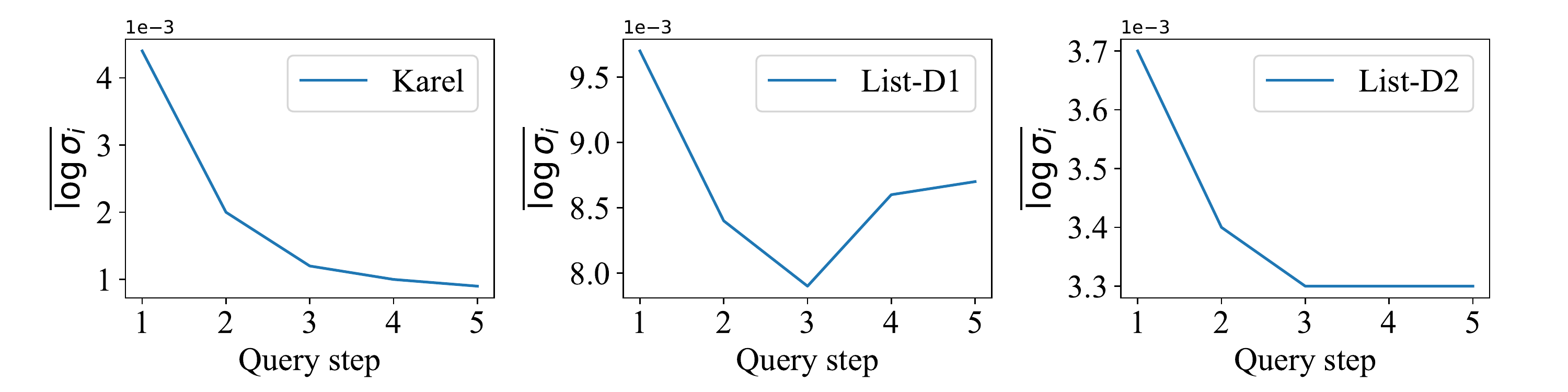}
\end{center}
\caption{The entropy of the distribution decays when query goes on.}
\label{fig:sigma}
\end{figure}

\subsection{Query by committee}
\label{app:qbc}
In this section, we present the details of our baseline algorithm: query by committee~\citep{DBLP:conf/colt/SeungOS92}.
Algorithm~\ref{alg:qbc} shows the crash-aware version of the QBC algorithm, which selects the query that can result in the most diverse outputs.
The crash-unaware version can be obtained simply by removing all CRASH judgments.
Note that, compared with the crash-unaware version, the crash-aware version queries the underlying program more times for CRASH judgment (see Table~\ref{table:query times}), and thus results in a much better performance.

\begin{table}[t]
\caption{The statistics on the query times of each crash-aware QBC query step.}
\begin{center}
\begin{tabular}{l|lllll}
&\multicolumn{1}{c}{\bf 1} &\multicolumn{1}{c}{\bf 2} &\multicolumn{1}{c}{\bf 3} &\multicolumn{1}{c}{\bf 4} &\multicolumn{1}{c}{\bf 5}
\\ \hline &&&&&\\
        avg & 5.72 & 2.47 & 2.96 & 4.27 & 4.86 \\
        max & 214 & 200 & 224 & 357 & 226 \\ 
        min & 0 & 0 & 0 & 0 & 0 \\
\end{tabular}
\end{center}
\label{table:query times}
\end{table}

\begin{algorithm}[t]
    \caption{Query by committee (QBC): crash-aware}
    \label{alg:qbc}
    \begin{algorithmic}[1]
        \Function{$\textsc{Query}$}{ }
            \State Initialize trained model $M$, query pool $Q$ and max query times $T$
            \State $p \gets Underlying\ program \ (oracle) $
            \Repeat
                \State $x_1 \gets sample from Q$
                \State $y_1 \gets p(x_1)$
            \Until{$y_1$ not CRASH}
            \State $\llbracket{e}\rrbracket \gets \{ (x_1,y_1)\} $
            \For{$ t \in \{2 \ldots T \}$}
                \State program candidates $\llbracket{\hat{p}}\rrbracket \gets M(\llbracket{e}\rrbracket)$
                \Comment{Get top K predictions by beam search}
                \State $x_t \gets \textsc{Select-Query}(Q,p,\llbracket{\hat{p}}\rrbracket)$
                \State $y_t \gets p(x_t)$
                \State $\llbracket{e}\rrbracket \gets \llbracket{e}\rrbracket \cup \{(x_t, y_t)\}$
            \EndFor
        \State \Return $\llbracket{e}\rrbracket$ 
        \EndFunction
    \end{algorithmic}
    \hfil
    \begin{algorithmic}[1]
        \Function{$\textsc{Select-Query}$}{$Q, p, \llbracket{\hat{p}}\rrbracket$}
        \Repeat
                \State $queries \gets$ sample 100 times from $Q$
                \State $score\_list \gets []$
                \Comment{Acquisition scores of queries}
                \For{$ q \in queries$}
                    \State $s_q \gets 0$
                    \For {$ \hat{p} \in \llbracket{\hat{p}}\rrbracket$}
                        \State $\hat{y} \gets \hat{p}(q)$
                        \If {$\hat{y}$ is unique}
                            \State $s_q \gets s_q + 1$
                            \Comment{The more diversity the output, the better the query}
                        \EndIf
                    \EndFor
                    \State $score\_list.append((q, s_q)) $
                \EndFor
                \State Sort $score\_list$ in a descending order with $s_q$
                \For {$ q \in score\_list$}
                \Comment{Select the query with the highest score and without CRASH}
                    \State $y \gets p(q)$
                    \If {$y$ not CRASH}
                        \State \Return $q$
                    \EndIf
                \EndFor
        \Until{a query is found}
        \Comment{If all 100 queries result in CRASH, repeat this process.}
        \EndFunction
    \end{algorithmic}
\end{algorithm}

\section{Theorems and proofs}
\subsection{The product of two normal distributions}
\label{app:product of normals}
In this section, we will show that the product of two normal distributions is a scaled normal distribution.
\begin{theorem}[The product of two normal distributions]
Given two normal distributions that satisfies $p(x)=\frac{1}{\sqrt{2\pi}\sigma}e^{-\frac{(x-\mu)^2}{2\sigma^2}}$, the product of them is a scaled normal "distribution" in the form of $\alpha \frac{1}{\sqrt{2\pi}\sigma'}e^{-\frac{(x-\mu')^2}{2\sigma'^2}}$, where $\alpha$ is the scale factor.
\end{theorem}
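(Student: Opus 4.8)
The plan is to prove the statement by direct computation: multiply the two densities, observe that the combined exponent is a quadratic polynomial in $x$, and reorganize it into canonical Gaussian form by completing the square. Label the two factors $p_1(x)=\frac{1}{\sqrt{2\pi}\sigma_1}e^{-\frac{(x-\mu_1)^2}{2\sigma_1^2}}$ and $p_2(x)=\frac{1}{\sqrt{2\pi}\sigma_2}e^{-\frac{(x-\mu_2)^2}{2\sigma_2^2}}$. Their product equals $\frac{1}{2\pi\sigma_1\sigma_2}$ times a single exponential whose exponent is $-\frac{1}{2}\bigl[\frac{(x-\mu_1)^2}{\sigma_1^2}+\frac{(x-\mu_2)^2}{\sigma_2^2}\bigr]$. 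Because this exponent is quadratic in $x$, the entire expression is forced to have the shape of a (possibly unnormalized) Gaussian, so the whole argument reduces to reading off its parameters.

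First I would expand the bracket and collect powers of $x$. The coefficient of $x^2$ is $\frac{1}{\sigma_1^2}+\frac{1}{\sigma_2^2}$, which I use to define the new variance through $\frac{1}{\sigma'^2}=\frac{1}{\sigma_1^2}+\frac{1}{\sigma_2^2}$, i.e. $\sigma'^2=\frac{\sigma_1^2\sigma_2^2}{\sigma_1^2+\sigma_2^2}$. Completing the square in $x$ then identifies the mean as the unique minimizer of the exponent, giving $\mu'=\sigma'^2\bigl(\frac{\mu_1}{\sigma_1^2}+\frac{\mu_2}{\sigma_2^2}\bigr)=\frac{\mu_1\sigma_2^2+\mu_2\sigma_1^2}{\sigma_1^2+\sigma_2^2}$. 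After this rearrangement the exponent reads $-\frac{(x-\mu')^2}{2\sigma'^2}+C$, where $C$ gathers all the $x$-independent remainder produced by completing the square.

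The last step is to absorb every factor that does not depend on $x$ into a single constant $\alpha$. Concretely I would set $\alpha=\frac{\sigma'}{\sqrt{2\pi}\sigma_1\sigma_2}e^{C}$, so that reinstating the normalization $\frac{1}{\sqrt{2\pi}\sigma'}$ reproduces the prefactor $\frac{1}{2\pi\sigma_1\sigma_2}e^{C}$ and the product becomes exactly $\alpha\frac{1}{\sqrt{2\pi}\sigma'}e^{-\frac{(x-\mu')^2}{2\sigma'^2}}$, as claimed. The only genuinely error-prone part is the bookkeeping of $C$: one must check that the leftover constant combines cleanly with the two normalization factors, the standard closed form being $\alpha=\frac{1}{\sqrt{2\pi(\sigma_1^2+\sigma_2^2)}}e^{-\frac{(\mu_1-\mu_2)^2}{2(\sigma_1^2+\sigma_2^2)}}$. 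Since the statement asserts only the existence of such an $\alpha$, I would keep this computation lightweight and instead emphasize the structural point that the $x$-dependence is fully captured by $\mu'$ and $\sigma'$ alone. This closure within the Gaussian family is precisely the property the F-space intersection operator relies on, and it also explains why the product of several independent example distributions can again be represented as one Normal with parameters $[\vmu',\log(\vsigma'^2)]$.
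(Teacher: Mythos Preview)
Your proposal is correct and follows essentially the same route as the paper: multiply the two densities, complete the square in the combined quadratic exponent to read off $\sigma'^2=\frac{\sigma_1^2\sigma_2^2}{\sigma_1^2+\sigma_2^2}$ and $\mu'=\frac{\mu_1\sigma_2^2+\mu_2\sigma_1^2}{\sigma_1^2+\sigma_2^2}$, and collect the $x$-independent remainder into $\alpha=\frac{1}{\sqrt{2\pi(\sigma_1^2+\sigma_2^2)}}e^{-\frac{(\mu_1-\mu_2)^2}{2(\sigma_1^2+\sigma_2^2)}}$. The only cosmetic difference is that you parametrize via the precision $\frac{1}{\sigma'^2}=\frac{1}{\sigma_1^2}+\frac{1}{\sigma_2^2}$ whereas the paper works directly with the variances, but the computations and final constants coincide.
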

\begin{proof}
Suppose we have two normal distributions $p_a$ and $p_b$:
\begin{equation}
\begin{aligned}
p_a(x) = \frac{1}{\sqrt{2\pi}\sigma_a}e^{-\frac{(x-\mu_a)^2}{2\sigma_a^2}}, \\
p_b(x) = \frac{1}{\sqrt{2\pi}\sigma_b}e^{-\frac{(x-\mu_b)^2}{2\sigma_b^2}}
.
\end{aligned}
\end{equation}

The product of $p_a$ and $p_b$:
\begin{equation}
\begin{aligned}
p_a(x)p_b(x) &= \frac{1}{\sqrt{2\pi}\sigma_a}e^{-\frac{(x-\mu_a)^2}{2\sigma_a^2}} \cdot \frac{1}{\sqrt{2\pi}\sigma_b}e^{-\frac{(x-\mu_b)^2}{2\sigma_b^2}} \\
&= \frac{1}{2\pi\sigma_a\sigma_b}e^{-(\frac{(x-\mu_a)^2}{2\sigma_a^2}+\frac{(x-\mu_b)^2}{2\sigma_b^2})}
.
\end{aligned}
\end{equation}
Consider the index part
\begin{equation}
\begin{aligned}
\frac{(x-\mu_a)^2}{2\sigma_a^2}+\frac{(x-\mu_b)^2}{2\sigma_b^2} 
&= \frac{(\sigma_a^2+\sigma_b^2)x^2-2(\mu_b\sigma_a^2+\mu_a\sigma_b^2)x+(\mu_a^2\sigma_b^2+\mu_b^2\sigma_a^2)}{2\sigma_a^2 2\sigma_b^2} \\
&= \frac{x^2-2\frac{\mu_b\sigma_a^2+\mu_a\sigma_b^2}{\sigma_a^2+\sigma_b^2}x+\frac{\mu_b^2\sigma_a^2+\mu_a^2\sigma_b^2}{\sigma_a^2+\sigma_b^2}}{\frac{2\sigma_a^2\sigma_b^2}{\sigma_a^2+\sigma_b^2}} \\
&= \frac{(x-\frac{\mu_b\sigma_a^2+\mu_a\sigma_b^2}{\sigma_a^2+\sigma_b^2})^2}{\frac{2\sigma_a^2\sigma_b^2}{\sigma_a^2+\sigma_b^2}} + \frac{\frac{\mu_b^2\sigma_a^2+\mu_a^2\sigma_b^2}{\sigma_a^2+\sigma_b^2}-(\frac{\mu_b\sigma_a^2+\mu_a\sigma_b^2}{\sigma_a^2+\sigma_b^2})^2}{\frac{2\sigma_a^2\sigma_b^2}{\sigma_a^2+\sigma_b^2}} \\
&= \gamma + \lambda
,
\end{aligned}
\end{equation}
where 
\begin{equation}
\begin{aligned}
\gamma &= \frac{(x-\frac{\mu_b\sigma_a^2+\mu_a\sigma_b^2}{\sigma_a^2+\sigma_b^2})^2}{\frac{2\sigma_a^2\sigma_b^2}{\sigma_a^2+\sigma_b^2}}, \\
\lambda &= \frac{\frac{\mu_b^2\sigma_a^2+\mu_a^2\sigma_b^2}{\sigma_a^2+\sigma_b^2}-(\frac{\mu_b\sigma_a^2+\mu_a\sigma_b^2}{\sigma_a^2+\sigma_b^2})^2}{\frac{2\sigma_a^2\sigma_b^2}{\sigma_a^2+\sigma_b^2}}
.
\end{aligned}
\end{equation}
Simplify $\lambda$:
\begin{equation}
\begin{aligned}
\lambda &= \frac{\frac{\mu_b^2\sigma_a^2+\mu_a^2\sigma_b^2}{\sigma_a^2+\sigma_b^2}-(\frac{\mu_b\sigma_a^2+\mu_a\sigma_b^2}{\sigma_a^2+\sigma_b^2})^2}{\frac{2\sigma_a^2\sigma_b^2}{\sigma_a^2+\sigma_b^2}} \\
&= \frac{(\mu_b^2\sigma_a^2+\mu_a^2\sigma_b^2)(\sigma_a^2+\sigma_b^2)-(\mu_b\sigma_a^2+\mu_a\sigma_b^2)^2}{2\sigma_a^2\sigma_b^2(\sigma_a^2+\sigma_b^2)} \\
&= \frac{\sigma_a^2\sigma_b^2(\mu_a^2+\mu_b^2-2\mu_a\mu_b)}{2\sigma_a^2\sigma_b^2(\sigma_a^2+\sigma_b^2)} \\
&= \frac{(\mu_a-\mu_b)^2}{2(\sigma_a^2+\sigma_b^2)}
.
\end{aligned}
\end{equation}
Finally, we get
\begin{equation}
\begin{aligned}
p_a(x)p_b(x) &= \frac{1}{2\pi\sigma_a\sigma_b}e^{-\lambda}e^{-\gamma} \\
&= \alpha \cdot \frac{1}{\sqrt{2\pi}\sigma'}e^{-\frac{(x-\mu')^2}{2\sigma'^2}}
,
\end{aligned}
\end{equation}
where 
\begin{equation}
\begin{aligned}
\mu' &= \frac{\mu_b\sigma_a^2+\mu_a\sigma_b^2}{\sigma_a^2+\sigma_b^2}, \\
\sigma' &= \sqrt{\frac{\sigma_a^2\sigma_b^2}{\sigma_a^2+\sigma_b^2}}, \\
\alpha &= \frac{1}{\sqrt{2\pi(\sigma_a^2+\sigma_b^2)}}e^{-\frac{(\mu_a-\mu_b)^2}{2(\sigma_a^2+\sigma_b^2)}}
.
\end{aligned}
\end{equation}
\end{proof}

Next, we will show that approximating the new $\mu'$ and $log(\sigma'^2)$ with the weighted sum of the $[\mu_a, \mu_b]$ and $[log(\sigma_a^2), log(\sigma_b^2)]$ is reasonable.

For $\mu'$, it is obvious that it can be seen as 
\begin{equation}
\begin{aligned}
\mu' &= \frac{\mu_b\sigma_a^2+\mu_a\sigma_b^2}{\sigma_a^2+\sigma_b^2}, \\
&= \frac{\sigma_b^2}{\sigma_a^2+\sigma_b^2}\cdot \mu_a + \frac{\sigma_a^2}{\sigma_a^2+\sigma_b^2}\cdot \mu_b
,
\end{aligned}
\end{equation}
which is a weighted sum.
For $log(\sigma'^2)$, we can rewrite it as follows
\begin{equation}
\begin{aligned}
log(\sigma'^2) &= log(\frac{\sigma_a^2\sigma_b^2}{\sigma_a^2+\sigma_b^2}), \\
&= log(\sigma_a^2) + log(\sigma_b^2) - log(\sigma_a^2+\sigma_b^2) \\
&= log(\sigma_a^2) + log(\sigma_b^2) - (hlog(\sigma_a^2) + klog(\sigma_b^2) + log(\frac{1}{\sigma_b^{2k}\sigma_a^{2(h-1)}} + \frac{1}{\sigma_a^{2h}\sigma_b^{2(k-1)}})) \\
&= (1-h)log(\sigma_a^2) + (1-k)log(\sigma_b^2) - log(\frac{1}{\sigma_b^{2k}\sigma_a^{2(h-1)}} + \frac{1}{\sigma_a^{2h}\sigma_b^{2(k-1)}})
,
\end{aligned}
\end{equation}
where $h$ and $k$ are two coefficients.
With suitable $h$ and $k$ learned, the last term can be ignored ($\frac{1}{\sigma_b^{2k}\sigma_a^{2(h-1)}} + \frac{1}{\sigma_a^{2h}\sigma_b^{2(k-1)}} \approx 1$) and thus $log(\sigma'^2)$ can be approximate by the weighted sum of $[log(\sigma_a^2), log(\sigma_b^2)]$.

\subsection{\hd{The recurrent training process}}
\label{app:recurrent}
\begin{figure}[t]
\begin{center}
\begin{tabular}{lcccc}
\toprule
program & $q_A$ & $q_B$ & $q_C$ & $q_D$ \\
\midrule
$p_0$ & $\surd$  & $\times$ & $\times$ & $\times$ \\
$p_1$ & $\surd$  & $\times$ & $\times$ & $\surd$  \\
$p_2$ & $\surd$  & $\times$ & $\surd$  & $\surd$  \\
$p_3$ & $\surd$  & $\surd$  & $\surd$  & $\times$ \\
$p_4$ & $\times$ & $\surd$  & $\surd$  & $\times$ \\
$p_5$ & $\times$ & $\surd$  & $\surd$  & $\surd$  \\
$p_6$ & $\times$ & $\surd$  & $\times$ & $\times$ \\
$p_7$ & $\times$ & $\times$ & $\times$ & $\surd$  \\
\bottomrule
\end{tabular}
\end{center}
\caption{An example: 8 candidate programs $p_{0-7}$ and 4 queries $q_{A-D}$ each with 2 possible responses \{$\surd$, $\times$\}.}
\label{fig:greedy example}
\end{figure}
We adopt a recurrent training process to model the mutual information between the input-output examples and the programs more accurately.
Traditionally, the next query is gained by selecting the query with the maximum mutual information conditioned on the former ones:
\begin{equation}
\label{equ:greedy}
q_{k} = \mathop{\arg\max}\limits_{x} I(\sP;(x, y)|\llbracket{e}\rrbracket_{k-1})
\end{equation}
However, this greedy strategy fails in some cases.
An example is shown in Figure~\ref{fig:greedy example}.
Suppose that after a series of queries, and finally there are only 8 candidate programs $\sP=\{p_0,...,p_7\}$ distributed uniformly and 4 queries $\sQ=\{q_A,...,q_D\}$ each with 2 possible responses \{$\surd$, $\times$\} left, our goal is to find out the underlying program with as few queries as possible (conditioned on the former queries which are omitted).
According to the definition of the mutual information $I(X;Y)=\sum_{x\in X}\sum_{y \in Y} P(x,y)log\frac{P(x,y)}{P(x)P(y)}$, we can calculate the mutual information between the 4 queries and the programs as follows (we use $q_{A-D}=\surd/\times$ to represent $p_i(q_{A-D})=\surd/\times$ for simplicity):
\begin{equation}
\begin{aligned}
\label{equ:mi1_1}
I(\sP;q_A) &= \sum_{p \in \sP}\sum_{q_A \in \{\surd,\times\}} P(p,q_A)log\frac{P(p,q_A)}{P(p)P(q_A)} 
&= 8*\frac{1}{8}*log(\frac{\frac{1}{8}}{\frac{1}{8}*\frac{1}{2}})=1
.
\end{aligned}
\end{equation}
Samilarly, $I(\sP;q_B)=I(\sP;q_C)=I(\sP;q_D)=1$.
Thus, Equation~\ref{equ:greedy} suggests that 4 queries share the same priority.
When the query process continues, however, this is not the case.
For the second query (let $q=(q_A,q_B)$):
\begin{equation}
\begin{aligned}
\label{equ:mi2_1}
I(\sP;q) &= \sum_{p \in \sP}\sum_{q \in \{\surd,\times\}^2} P(p,q)log\frac{P(p,q)}{P(p)P(q)} 
&= 6*\frac{1}{8}*log(\frac{\frac{1}{8}}{\frac{1}{8}*\frac{3}{8}}) + 2*\frac{1}{8}*log(\frac{\frac{1}{8}}{\frac{1}{8}*\frac{1}{8}})
&=1.81
,
\end{aligned}
\end{equation}
where $\{\surd,\times\}^2=\{\surd,\times\}\times\{\surd,\times\}$. Similarly, 
\begin{equation}
\begin{aligned}
\label{equ:mi2_2}
&I(\sP;(q_A,q_C))=I(\sP;(q_A,q_D))=I(\sP;(q_C,q_D))=2, \\ &I(\sP;(q_B,q_C))=I(\sP;(q_B,q_D))=1.81.
\end{aligned}
\end{equation}
Equation~\ref{equ:mi2_1} and Equation~\ref{equ:mi2_2} show that considering the longer horizon, $q_B$ is the worst choice as the first query because it gets the minimum mutual information whichever the second query is.
This conclusion is also straightforward without calculation: if we choose $q_{A, C, D}$ as the first query, then we only need to query for two more times; And if we choose $q_B$ as the first query, then we need 3.25 queries on average with 4 queries in the worst case.

To this end, we adopt a recurrent training process as shown in Figure~\ref{fig:process}.
In the recurrent training process, the gradient can be propagated through multiple query steps and thus the current query selection can be affected by future queries.

\hdr{
\subsection{\hd{the query process and the mutual information}}
\label{app:query and mi}
In this section, we will show the connection between the optimal query strategy and the mutual information between the examples and the programs.


Following Definition~\ref{def:optimal query2}, we can construct the query process as a decision tree, where the nodes represent the different queries, the edges represent the query results, and the length of decision path $L(\rho)$ represents the number of queries.
Thus, the optimization of the query strategy is equivalent to the minimization of the average decision length $\sum_{\rho} P(\rho)L(\rho)$.
If all queries and the corresponding results can be enumerated, then this minimization problem can be converted to an optimum variable-Length encoding problem which has been solved by Huffman and the result is known as the Huffman encoding~\citep{Gallager1968InformationTA}.
However, the exhaustion on all queries is unacceptable for problems with a large query space, so we take the result of the Huffman encoding as the golden standard which the query strategy should be optimized to get close to. The relationship between the Huffman encoding length and the entropy is shown as follows:
\begin{lemma}
\label{the:huffman}
Let the minimal average length obtained by the Huffman decoding be $L_h$, we have
\begin{equation}
\frac{H(\sP)}{log(|\sO|)} \leq L_h < \frac{H(\sP)}{log(|\sO|)} + 1,
\end{equation}
where $H(\cdot)=-\sum P_i log(P_i)$ denotes the entropy and $|\sO|$ is the size of the output space $|\sO|$.
\end{lemma}
\begin{proof}
A similar proof can also be seen in \cite{Gallager1968InformationTA}, and we present the proof here for completeness.

For the lhs,
\begin{equation}
\frac{H(\sP)}{log(|\sO|)} \leq L_h \Leftrightarrow H(\sP) - L_h log(|\sO|) \leq 0
\end{equation}
Suppose that $|\sP|=K$, let $P(\rho_1),...,P(\rho_K)$ be the probabilities of the programs, and let $L(\rho_1),...,L(\rho_1)$ be the corresponding optimal decision lengths that $L_h=\sum_{\rho\in\sP} P(\rho)L(\rho)$.
\begin{equation}
\begin{aligned}
H(\sP) - L_h log(|\sO|) &= -\sum_{k=1}^K P(\rho_k) log(\rho_k) - \sum_{k=1}^K P(\rho_k)L(\rho_k) log(|\sO|) \\
&= \sum_{k=1}^KP(\rho_k)log(\frac{|\sO|^{-L(\rho_k)}}{P(\rho_k)})
\end{aligned}
\end{equation}
Using the inequality $log\ x \leq (x - 1)log\ e\ for\ x > 0$,
\begin{equation}
\label{equ:lhs}
H(\sP) - L_h log(|\sO|) 
\leq (log\ e)[\sum_{k=1}^K|\sO|^{-L(\rho_k)}-\sum_{k=1}^K P(\rho_k)]
= (log\ e)[\sum_{k=1}^K|\sO|^{-L(\rho_k)}-1]
\leq 0
\end{equation}
The last inequality follows from the Kraft's inequality, and the lhs has been proved.
For the rhs, notice that the equality in Equation~\ref{equ:lhs} holds if and only if
\begin{equation}
P(\rho_k)=|\sO|^{-L(\rho_k)}, 1 \leq k \leq 1.
\end{equation}
In our case, $L(\rho_k)$ must be integers, so
\begin{equation}
|\sO|^{-L(\rho_k)} \leq P(\rho_k) < |\sO|^{-L(\rho_k)+1},\ 1 \leq k \leq 1.
\end{equation}
The right side inequality
\begin{equation}
\begin{aligned}
log(P(\rho_k)) &< (-L(\rho_k)+1)log(|\sO|) \\
L(\rho_k) &< \frac{-log(P(\rho_k))}{log(|\sO|)} + 1 \\
\sum_{k=1}^K P(\rho_k) L(\rho_k) &< \frac{-\sum_{k=1}^K P(\rho_k)log(P(\rho_k))}{log(|\sO|)} + \sum_{k=1}^K P(\rho_k) \\
L_h &< \frac{-\sum_{k=1}^K P(\rho_k)log(P(\rho_k))}{log(|\sO|)} + 1.
\end{aligned}
\end{equation}

\end{proof}

\begin{theorem}
The examples queried by the optimal query strategy in Definition~\ref{def:optimal query2} is equivalent to the examples that maximizes the mutual information with the program space $\sP$:

\begin{equation}
\llbracket{e}\rrbracket^*=\mathop{\arg\max}\limits_{\llbracket{e}\rrbracket} I(\sP;\llbracket{e}\rrbracket).
\end{equation}
\end{theorem}




\begin{proof}

Suppose that after steps of queries, we gain several input-output examples $\llbracket{e}\rrbracket$ by a query strategy.
To measure the quality of these examples, we denote queried decision length as $L^{prev}=|\llbracket{e}\rrbracket|$ and the minimal decision length for the rest uncertain part as $L^{rest}_h$, which satisfies 
\begin{equation}
cH(\sP|\llbracket{e}\rrbracket) \leq L^{rest}_h < cH(\sP|\llbracket{e}\rrbracket) + 1.
\end{equation}
Hence the minimal average decision length after these queries $L_q$ satisfies
\begin{equation}
L_q = L^{prev}+L^{rest}_h < |\llbracket{e}\rrbracket| + cH(\sP|\llbracket{e}\rrbracket) + 1.
\end{equation}
The gap between $L_q$ and $L_h$ is
\begin{equation}
\begin{aligned}
|L_q - L_h| &< |\llbracket{e}\rrbracket| + cH(\sP|\llbracket{e}\rrbracket) + 1 - cH(\sP) \\
&= -cI(\sP;\llbracket{e}\rrbracket) + |\llbracket{e}\rrbracket| + 1,
\end{aligned}
\end{equation}
where $I(\sP;\llbracket{e}\rrbracket) = H(\sP) - H(\sP|\llbracket{e}\rrbracket)$ denotes the mutual information between $\sP$ and $\llbracket{e}\rrbracket$. 
Thus, the optimal queries are
\begin{equation}
\begin{aligned}
\llbracket{e}\rrbracket^* 
&= \mathop{\arg\min}\limits_{\llbracket{e}\rrbracket} |L_q - L_h| \\
&= \mathop{\arg\max}\limits_{\llbracket{e}\rrbracket} I(\sP;\llbracket{e}\rrbracket)
\end{aligned}
\end{equation}
\end{proof}
}

\section{Additional related work}
\label{app:related work}
\subsection{Active Learning}
Active learning is a research domain that aims to reduce the cost of labeling by selecting the most representative samples iteratively from the unlabeled dataset and then asking them to an oracle for labeling while training.
According to \cite{Shui2020DeepAL}, active learning can be divided into 
pool-based sampling~\citep{Angluin1988QueriesAC, King2004FunctionalGH}, which judges whether a sample should be selected for query-based on the evaluation of the entire dataset;
steam-based sampling~\citep{Dagan1995CommitteeBasedSF, Krishnamurthy2002AlgorithmsFO}, which judges each sample independently compared to pool-based sampling;
and membership query synthesis~\citep{Lewis1994ASA}, which means that the unlabeled sample can be generated by the learner instead of selecting from the dataset only.
Although active learning involves querying an oracle iteratively, which is similar to our framework, there are still two main differences between them.
(1) For the final purpose, active learning aims to finish the training stage at a low cost, and the inference stage remains the same as other machine learning tasks without the query process, while our framework aims to find the oracle (\ie the underlying program) in a symbolic form, and the query process is retained during inference. 
(2) For the query process, active learning assumes only one oracle (\ie the same query always gets the same label). In contrast, our framework assumes multiple oracles (\ie the same query will get different responses if the underlying programs are different). 
\subsection{Automated Black-box testing}
Black-box testing is a method of software testing aiming to examine the functionality of a black-box (such as a piece of a program) by a large number of test cases without perceiving its internal structures.
The most famous automated test cases generation method is learning-based testing (LBT)~\citep{DBLP:conf/issta/Meinke04, DBLP:conf/pts/MeinkeN10, DBLP:conf/tap/MeinkeS11}.
LBT is an iterative approach to generate test cases by interacting with the black-box, which sounds similar to the query problem mentioned in this paper.
However, the settings and the purpose of the black-box testing are totally different from ours. 
In detail, the black-box testing assumes the existence of a target requirement (or target function) and a black-box implementation of this requirement (such as a piece of program which is unknown). The purpose is to check the black-box implementation to ensure that there is no bug or difference between the target requirement and the black-box by generating test cases (\ie input-output examples). In a contrast, in our settings, the target function does not exist, and all we have is a black-box (or called oracle / underlying program in our paper). Our goal is to query this black box and guess which program is hidden inside based on the experience learned from the training set.

\section{Two query examples}
In this section, we show two query examples that cover all branches of the program while the well-designed dataset fails to do this.
Each example consists of the underlying program and the corresponding important queries that improve the branch coverage.
See Figure~\ref{fig:io_1} and Figure~\ref{fig:io_2}.

\begin{figure}[t]
\begin{center}
\includegraphics[scale=.9]{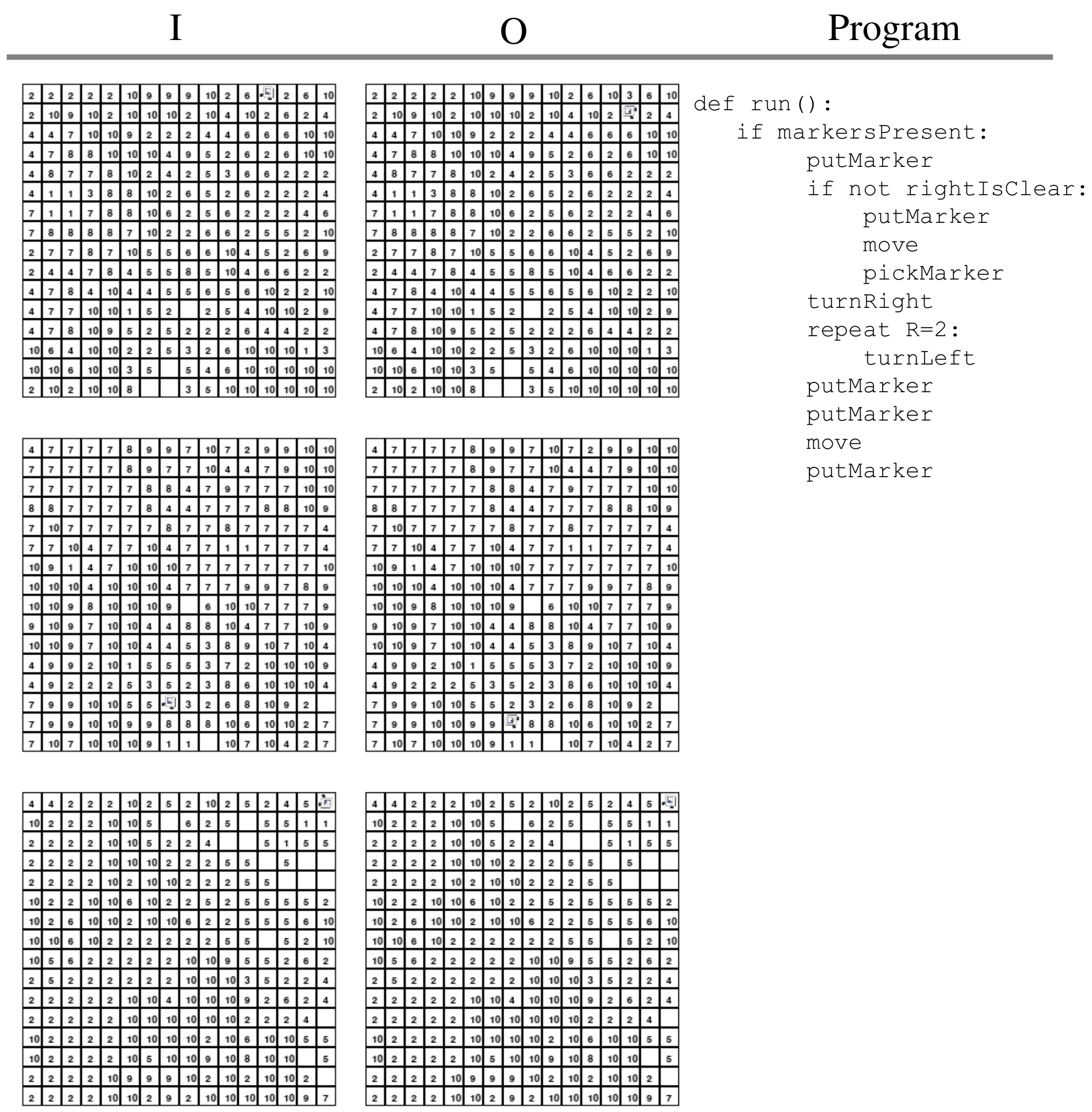}
\end{center}
\caption{An query example of Karel. The number in the cell denotes the amount of markers.}
\label{fig:io_1}
\end{figure}

\begin{figure}[t]
\begin{center}
\includegraphics[scale=.9]{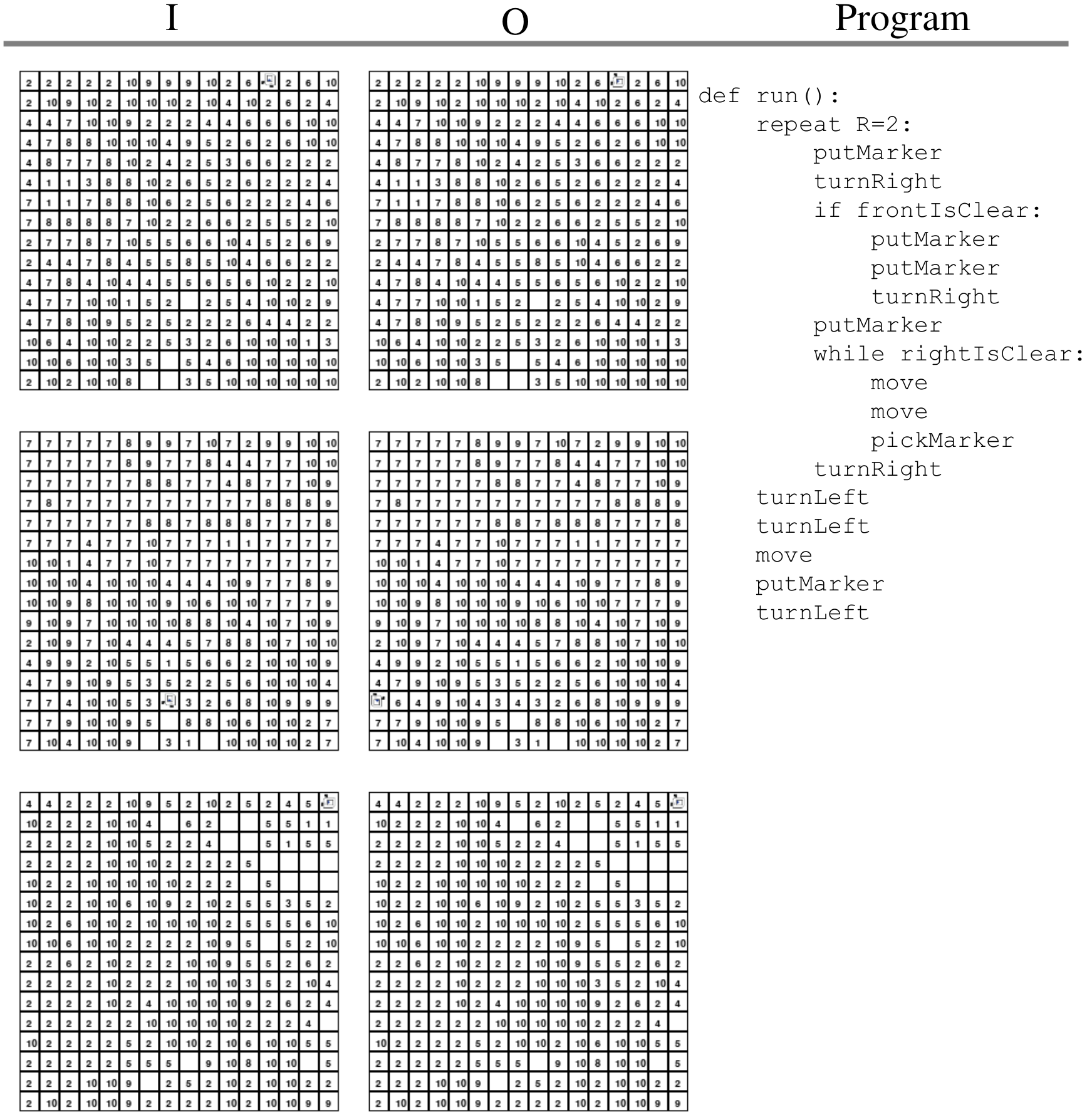}
\end{center}
\caption{Another query example of Karel. The number in the cell denotes the amount of markers.}
\label{fig:io_2}
\end{figure}

\end{document}